\newcommand{\mypara}[1]{\vspace{0.40em}\noindent{\bf #1.}}
\newcommand{\vswd}{\vspace{0.3em}}
\newcommand{\bit}{\vswd\begin{itemize*}}
\newcommand{\eit}{\end{itemize*}\vswd}
\newcommand{\ben}{\vswd\begin{enumerate*}}
\newcommand{\een}{\end{enumerate*}\vswd}
\newcommand{\bea}{\vspace{-0.0em}\begin{eqnarray}}
\newcommand{\eea}{\end{eqnarray}\vspace{-0.0em}}
\newcommand{\beq}{\vspace{-0.0em}\begin{equation}}
\newcommand{\eeq}{\end{equation}\vspace{-0.0em}}
\renewcommand{\bit}{\vswd\begin{compactitem}}
\renewcommand{\eit}{\end{compactitem}\vswd}
\renewcommand{\ben}{\vswd\begin{compactenum}}
\renewcommand{\een}{\end{compactenum}\vswd}
\newcommand{\method}{\texttt{EvoBrain}\xspace}
\newcommand{\timethengraph}{\textit {time-then-graph}\xspace}
\newcommand{\timeandgraph}{\textit {time-and-graph}\xspace}
\newcommand{\graphthentime}{\textit {graph-then-time}\xspace}
\newcommand{\exprlt}{\precneqq}
\newcommand{\DyGNN}{dynamic GNNs\xspace}
\newcommand{\TAG}{\textit{time-and-graph}\xspace}
\newcommand{\GTT}{\textit{graph-then-time}\xspace}
\newcommand{\TTG}{\textit{time-then-graph}\xspace}
\definecolor{myPurple}{RGB}{128,0,128}
\newtheorem{defn}{Definition}
\newtheorem{problem}{Problem}
\def\BibTeX{{\rm B\kern-.05em{\sc i\kern-.025em b}\kern-.08em
    T\kern-.1667em\lower.7ex\hbox{E}\kern-.125emX}}
\title{EvoBrain: Dynamic Multi-Channel EEG Graph Modeling for Time-Evolving Brain Networks}
\author{%
  Rikuto Kotoge\textsuperscript{1} \quad Zheng Chen\textsuperscript{1}
  \quad Tasuku Kimura\textsuperscript{1} \quad Yasuko Matsubara\textsuperscript{1} \\
  \textbf{Takufumi Yanagisawa\textsuperscript{2,3}\quad  Haruhiko Kishima\textsuperscript{2} \quad  Yasushi Sakurai\textsuperscript{1}}\\
  \vspace{-0.5em} \\
  \textsuperscript{1}SANKEN, The Univerity of Osaka, Japan \\
  \textsuperscript{2}Department of Neurosurgery, Graduate School of Medicine, The University of Osaka, Japan \\
  \textsuperscript{3}Institute for Advanced Co-Creation Studies, The University of Osaka, Japan \\
  \vspace{-0.5em} \\
  \textsuperscript{1}\texttt{\{rikuto88, chenz, tasuku, yasuko, yasushi\}@sanken.osaka-u.ac.jp} \\
  \textsuperscript{2}\texttt{\{tyanagisawa, hkishima\}@nsurg.med.osaka-u.ac.jp} \\
}
\begin{document}

\maketitle

\begin{abstract}

Dynamic GNNs, which integrate temporal and spatial features in Electroencephalography (EEG) data, have shown great potential in automating seizure detection.
However, fully capturing the underlying dynamics necessary to represent brain states, such as seizure and non-seizure, remains a non-trivial task and presents two fundamental challenges.
First, most existing dynamic GNN methods are built on temporally fixed static graphs, which fail to reflect the evolving nature of brain connectivity during seizure progression. 
Second, current efforts to jointly model temporal signals and graph structures and, more importantly, their interactions remain nascent, often resulting in inconsistent performance.
To address these challenges, we present the first theoretical analysis of these two problems, demonstrating the effectiveness and necessity of explicit dynamic modeling and time-then-graph dynamic GNN method.
Building on these insights, we propose \method, a novel seizure detection model that integrates a two-stream Mamba architecture with a GCN enhanced by Laplacian Positional Encoding, following neurological insights.
Moreover, \method incorporates explicitly dynamic graph structures, allowing both nodes and edges to evolve over time.
Our contributions include 
(a) a theoretical analysis proving the expressivity advantage of explicit dynamic modeling and \TTG over other approaches, 
(b) a novel and efficient model that significantly improves AUROC by 23\% and F1 score by 30\%, compared with the dynamic GNN baseline, and 
(c) broad evaluations of our method on the challenging early seizure prediction task.
\end{abstract}

\section{Introduction}
    \label{sec:intro}

\begin{figure}
  \centering
  \includegraphics[width=0.99\linewidth]{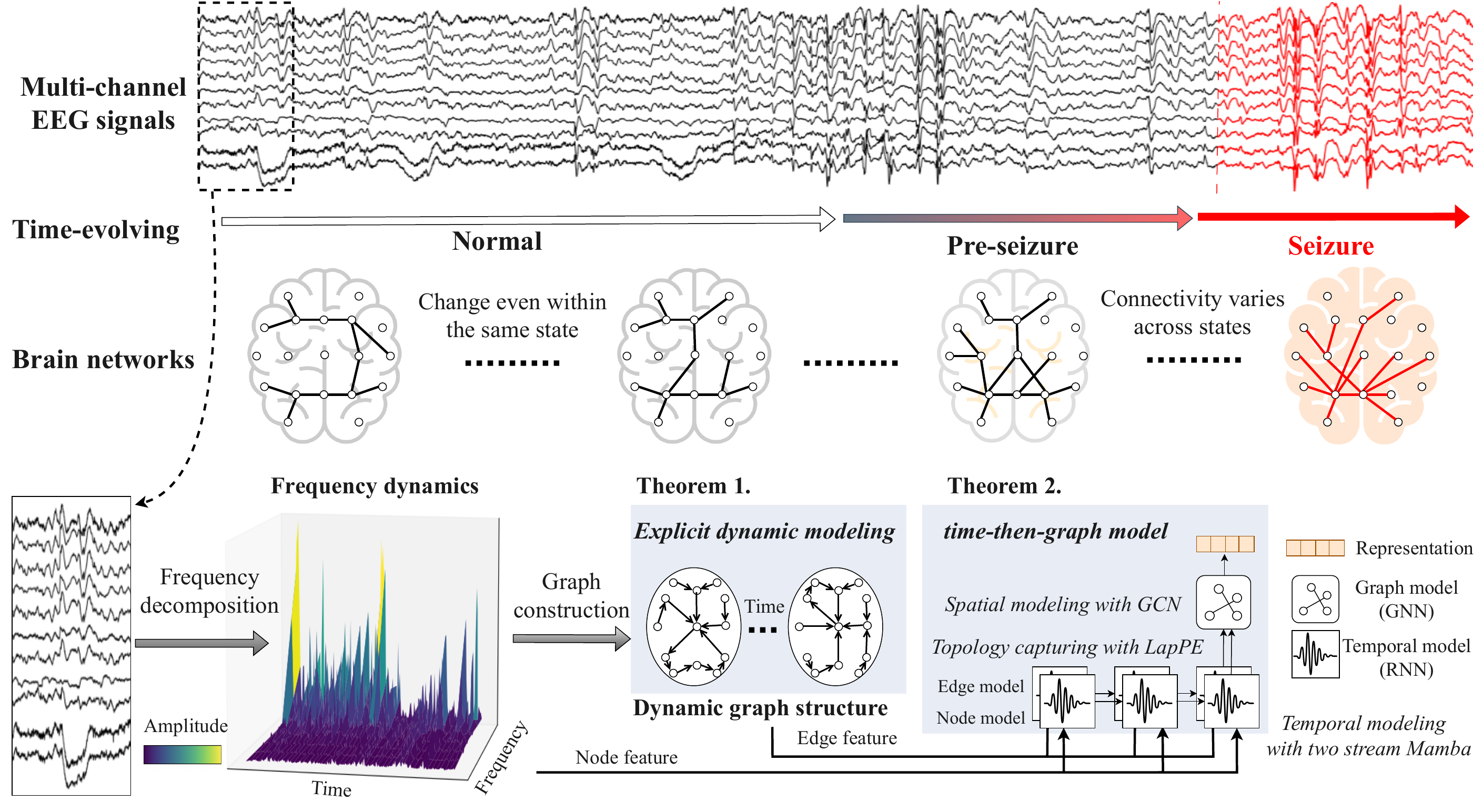}
  \caption{
  The brain network evolves over time, and changes occurring during seizures and immediately before them in the pre-seizure phase, especially within specific zones, are clinically important. These changes are captured using dynamic graphs derived from multi-channel EEG signals. 
  \method incorporates a explicit dynamic graph modeling and time-then-graph architecture.
  }
  \label{fig:story}
  \vspace{-0.1in}
\end{figure}

Epileptic seizures are typically considered a network disorder \citep{PNAS2014}.
The abnormal connections across brain regions often serve as markers of seizure events \citep{NatNeuroscience2021}. 
As such, recent studies have leveraged graph neural networks (GNNs) to model these networks for automating seizure detection \citep{MBrain_KDD23,GNN_AAAI23}.
Considering the unfolding time-course of brain dynamics, a recent trend models EEGs as a sequence of time-varying graphs.
Temporal models, such as recurrent neural networks (RNNs), integrated with GNNs, known as \DyGNN, have been proposed to learn spatiotemporal features in EEGs
\citep{graphs4mer}.
By modeling graphs at fine-grained time steps and their sequential representations, these methods can capture how graphs or brain states evolve over time, further enhancing seizure detection accuracy.

However, effectively representing brain dynamics by integrating  graph and temporal models remains non-trivial. 
In this paper, we study this critical problem in accurate seizure detection and prediction by learning dynamic representations from EEGs. 
In essence, we mainly face two challenges:\\
$\rm(\hspace{.18em}1\hspace{.18em})$ \textit{Representing EEG dynamics} (Problem \ref{prob:implicit_vs_explicit}).
We observe that although many existing methods are labeled as “dynamic,” they often employ static graph structures \citep{GNN_AAAI23, GNN_ICLR22}. 
Typically, these methods construct  a predefined graph based on EEG channel correlations from the initial snapshot, which is then fixed and shared across all subsequent EEG snapshots. 
This implicitly assumes that the spatial structure of brain activity remains constant over time.
In contrast, seizures induce evolving patterns of neuronal synchrony and desynchrony across different brain regions \citep{brain_dynamics, NatNeuroscience2021,SODorAAAI2025}, as shown in Figure \ref{fig:story}.\\
$\rm(\hspace{.18em}2\hspace{.18em})$ 
\textit{Effective Spatio-temporal Modeling (Problem \ref{prob:dynamic_eeg}).} 
Dynamic GNNs can generally be categorized into \TTG, \GTT, and \TAG architectures, following the taxonomy of \cite{time-then-graph} (A concise illustration is provided in Appendix Figure \ref{fig: illustration}).
The \TTG first model the temporal dynamics and then 
employ GNNs to learn spatial representations.
The \GTT first applies GNNs to each EEG snapshot independently, and then learns temporal dynamics from the resulting graph features. 
The \TAG is a recurrent GNN to capture temporal interactions between EEG snapshots before performing graph learning at each time step.
However, the independent GNNs in \GTT represent information at single time steps without accounting for dynamic interactions between time steps. 
While recurrent GNNs in \TAG capture graph interactions, they also rely heavily on the independent initial GNNs.
Overall, an effective \DyGNN for integrating temporal and graph-based representations to model brain dynamics remains poorly understood.\\

\noindent\textbf{Novelty and Contributions}:
We first analyze a theoretical foundation for designing dynamic graph structures and models that effectively represent brain dynamics.
Building upon this foundation, we propose \method, which effectively learns \textbf{Evo}lving, dynamic characteristics in \textbf{Brain} networks for accurate seizure detection and prediction.
Overall, we summarize our contributions as: 
\begin{itemize}[left=0pt]
    \item \textbf{Theoretical Analysis.}
    To tackle the first challenge, we propose and analyze dynamic graph structure that explicitly incorporate temporal EEG graph modeling (Theorem \ref{theorem:expressiveness_dynamic_strucutre}).
    To tackle the second challenge, we first theoretically analyze different \DyGNN approaches from EEG graph perspective (Theorem \ref{tgtotal}). 
    We discuss the necessarily of \DyGNN at node-level, since the node similarity measures are key factors in determining EEG graph construction.
    \item \textbf{Novel Model Design.}
    We hence propose a novel \TTG model, \method, which integrates a two-stream Mamba architecture \citep{Mamba} with a GCN enhanced by Laplacian Positional Encoding  \citep{lappe}, following neurological insights.
     \method achieves up to 23\% and 30\% improvements in AUROC and F1 scores, respectively, compared with the dynamic GNN baseline. 
Also, \method is 17\(\times\) faster than the SOTA \TAG dynamic GNN.
    \item \textbf{Broad Evaluation.}
    Unlike most seizure detection studies \citep{TSTCC_ijcai21,MBrain_KDD23,GNN_AAAI23}, we evaluate the more challenging task of seizure prediction, which aims to identify the preictal state before seizures. This is critical for early intervention in clinical settings, and \method maintains performance, with a 13.8\% improvement in AUROC.
\end{itemize}

\section{Problem Setting}

\subsection{Notations}
We define an EEG $\bm{x}$ with $N$ channels and $T$ snapshots as a graph $\mathcal{G}=(\mathcal{A},\mathcal{X})$, 
where \(\mathcal{A}=(\mathcal{V},\mathcal{E})\) and \(\mathcal{A}\in\mathbb{R}^{N\times N\times T}\) are the adjacency matrices.
\(\mathcal{V} \) and \(\mathcal{E}\) represent the channels (i.e., nodes) and edges.
Notably, most existing work constructs a weighted adjacency matrix \(\mathbf{A}\in\mathbb{R}^{N\times N}\) as fixed across $T$ meaning that all EEG snapshots share the same graph structure and only the node features $H$ are computed iteratively at each snapshot. 
In this paper, each edge \(e_{i,j,t} \in \mathcal{E}\) represents pairwise connectivity between channels \(v_{i}\) and \(v_{j}\), where $i,j \in N$. 
The feature vector \(x_{i,t} \in \mathbb{R}^d\) captures the electrical activity of $i$-th channel during the EEG snapshot at time step \(t\).
If $e_{i,j,t}$ exists,  \(\bm{a}_{i,j,t}\) is a non-zero value.
\(\bm{a}_{i,j,t} \in \mathbb{R}\) quantifies the connectivity strength between two channels for each snapshot.
To represent temporal EEG graphs, we define the embedding of node \(v_i\) at time step \(t\) as \(h^{node}_{i,t}\in \mathbb{R}^k\), which captures both the spatial connectivity information from the edges and the temporal dynamics from previous node embeddings.
The embedding of edge $e_{i,j,t}$, denoted as \(h^{edge}_{i,j,t}\in \mathbb{R}^l\), captures the temporal evolution of channel connectivity, reflecting changes in brain networks.

\subsection{Problem - Dynamic GNN Expressivity in EEG Modeling}

Brain networks in different states can manifest as distinct graph structures, as shown in Figure \ref{fig:story}. 
We treat \textit{expressivity analysis} as a graph isomorphism problem \citep{GIN_ICLR19}, where non-isomorphic EEG graphs represent different brain states, enabling the model to effectively distinguish between seizure and non-seizure graphs.
We formulate the challenge of representing EEG dynamics as the distinction between implicit and explicit dynamic graph modeling in Problem~\ref{prob:implicit_vs_explicit}. 
We further define the challenge of effective spatio-temporal modeling as the investigation of the expressivity of \GTT, \TAG, and \TTG \citep{time-then-graph} in dynamic EEG graph analysis in Problem~\ref{prob:dynamic_eeg}.

\begin{defn}[Implicit Dynamic Graph Modeling - Static Structure]
\label{def:explicit_implicit_graph_construction}
This approach fixes a single adjacency matrix 
\(\mathbf{A} \in \mathbb{R}^{N \times N}\) across all time steps,
although each node's feature vector \(\bm{x}_{:,t}\) evolves across \(t\). 
Formally, 
$\mathbf{A}_{:,:,t} \;=\; \widehat{\mathbf{A}},$
for each $t \in \{1,\dots,T\},$
where $\widehat{\mathbf{A}}$ is constant for all $t$.
Hence, the spatial connectivity among EEG channels remains unchanged, and only node features capture the dynamic aspects.
\end{defn}

\begin{defn}[Explicit Dynamic Graph Modeling - Dynamic Structure]
\label{def:explicit_graph_construction}
In contrast to the implicit setting, here both node features \(\bm{x}_{:,t}\) and the adjacency matrices 
\(\{\mathbf{A}_{:,:,t}\}_{t=1}^T\) can vary with time. 
Specifically, \(\mathbf{A}_{:,:,t}\in \mathbb{R}^{N\times N}\) at each snapshot \(t\) may be computed by a function \(f\):
$
    \mathbf{A}_{:,:,t} \;=\; f\bigl(\bm{x}_{:,t}\bigr),
    \quad \forall t \in \{1,\dots,T\},
$
thereby capturing the dynamic evolution of channel connectivity in addition to time-varying nodes.
\end{defn}

\begin{problem}[Implicit vs. Explicit Dynamic Graph Modeling]
\label{prob:implicit_vs_explicit}
We consider two approaches for capturing spatial relationships. In the \textit{implicit} (static) approach, a single adjacency matrix $\mathbf{A}$ remains fixed across all time steps, so only the node features evolve. In the \textit{explicit} (dynamic) approach, both node features and adjacency matrices $\{\mathbf{A}_{:,:,t}\}_{t=1}^T$ can change with $t$, allowing for time-varying connectivities derived from the EEG data. Our goal is to compare the expressiveness of these two approaches in capturing spatiotemporal dependencies for dynamic EEG graph analysis.
\end{problem}

\begin{defn}[Graph-then-time]
\label{def:graph-then-time}
This architecture first applies GNNs to learn spatial, graph information at each $t$ independently, followed by the temporal processing (e.g., by RNNs) of the resulting node embeddings.
The formal definition is given as:

\begin{equation}
\label{eqn:graphthentime}
\begin{aligned}
    \bm{h}^{node}_{i, t} &= \text{Cell}\bigg(
        \Big[
            \text{GNN}_\text{in}^{L}\big(
                \mathbf{X}_{:, t}, \mathbf{A}_{:, :, t}
            \big)
        \Big]_{i},
        \bm{h}^{node}_{i, t - 1}
    \bigg). 
\quad \mathbf{Z} &= \textbf{H}^{node}_{:, T},
\quad \forall i \in \mathcal{V},
\end{aligned}
\end{equation}
where $\bm{h}^{node}_{i, t} (1 \leq t \leq T)$ denotes the embedding of node $i \in \mathcal{V}$,
$\text{GNN}_\text{in}^{L}(\mathbf{X}_{:, t}, \bm{a}_{:, :, t})$ denotes graph learning on the current snapshot through $L$ layer GNNs. 
The learned embeddings, $\bm{h}^{node}_{i, t - 1}$, are then passed into the RNN cell to capture the temporal dependencies. 
The last step output $\bm{h}_{:, T}$ is considered the final representation $\mathbf{Z}$.
\end{defn}

\begin{defn}[Time-and-graph]
\label{def:time-and-graph}
This architecture alternately processes time and graph components, applying GNNs to each EEG snapshot, as formally defined by:
\begin{equation}
\label{eqn:timeandgraph}
\begin{aligned}
\bm{h}^{node}_{i, t} &= \text{Cell}\bigg(
     \Big[
        \text{GNN}_\text{in}^{L}\big(
            \textbf{X}_{:, t}, \mathbf{A}_{:, :, t}
        \big)
    \Big]_{i},
     \Big[
        \text{GNN}_\text{rc}^{L}\big(
            \textbf{H}_{:, t - 1}, \mathbf{A}_{:, :, t}
        \big)
    \Big]_{i}
\bigg), 
\quad \mathbf{Z} &= \textbf{H}^{node}_{:, T},
\quad \forall i \in \mathcal{V},
\end{aligned}
\end{equation}
where we initialize $H_{i, 0} = 0$.
$\text{GNN}_\text{in}^{L}$ encodes each $\bm{x}_{:, t}$ while
$\text{GNN}_\text{rc}^{L}$ encodes representations from historical snapshots $\textbf{H}_{:, t - 1}$, 
and RNN cell embeds evolution of those graph representations.
\end{defn}

\begin{defn}[Time-then-graph]
\label{def:time-then-graph}
This architecture first models the evolution of node and edge attributes over time and then applies a GNN to the resulting static graph for final representation:
\begin{equation}
\begin{aligned}
\label{eqn:timethengraph}
\bf{h}^\text{node}_{i}  &= \text{RNN}^\text{node}\big( \textbf{X}_{i, \leq T} \big), \quad\forall i \in \mathcal{V}, \quad
\bf{h}^\text{edge}_{i, j}  &= \text{RNN}^\text{edge}\big(
    \bm{a}_{i, j, \leq T}
\big), \quad \forall (i, j) \in \mathcal{E}, \\
    \mathbf{Z} &= \text{GNN}^{L}\big( \mathbf{H}^\text{node}, \mathbf{H}^\text{edge} \big).
\end{aligned}
\end{equation}
\timethengraph represents the evolution of $\bm{h}^\text{node}$ and $\bm{h}^\text{edge}$ using two sequential models $\text{RNN}^\text{node}$ and $\text{RNN}^\text{edge}$, 
resulting in a new (static) graph, which is then encoded by a $\text{GNN}^{L}$.
\end{defn}

\begin{problem}[Expressive Dynamic EEG Graph Architecture]
\label{prob:dynamic_eeg}
Determine which of these three architectures exhibits the highest expressiveness for dynamic EEG graph modeling, and characterize their relative representational power.
\end{problem}

\section{Theoretical Analysis for Dynamic EEG Graphs}
    \label{sec:preliminary}
In this section, we aim to provide a theoretical analysis of the two problems. 
To this end, we employ 1-Weisfeiler-Lehman (1-WL) GNNs, a standard tool for analyzing graph isomorphism, as detailed in Appendix~\ref{Tools}.
In the context of dynamic EEG analysis, an expressive model should be able to distinguish between different brain states by identifying non-isomorphic graphs.\\
\textbf{Remark.} 
For Theorem~\ref{tgtotal}, we follow the general GNN taxonomy proposed by \citet{time-then-graph}, but extend the analysis to EEGs, specifically focusing on node-level expressivity in dynamic EEG graphs. 
Their analysis partially targets edge- or structure-level representations using unattributed graphs, but it does not explicitly consider node features. 
In contrast, node features are essential in EEG analysis, as EEG graph construction typically relies on pairwise similarity between channels \citep{GNN_AAAI23, GNN_ICLR22}.
To this end, we provide formal theorems and proofs that consistently incorporate node features throughout the expressivity analysis. 
We also discuss the necessity of jointly modeling both node and edge representations from the perspective of EEG graphs in Appendix~\ref{proof_representaiton}.

\begin{restatable}{theorem}{implicitexplicit}
\label{theorem:expressiveness_dynamic_strucutre}
[Implicit $\precneqq$ Explicit Dynamic Graph Modeling.]
Explicit dynamic modeling (dynamic adjacency matrices) is strictly more expressive than implicit dynamic modeling (static graph structures) in capturing spatiotemporal dependencies in EEG signals.
\end{restatable}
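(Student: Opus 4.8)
The plan is to split the strict comparison $\text{Implicit}\precneqq\text{Explicit}$ into the two assertions it encodes: that explicit modeling is \emph{at least} as expressive as implicit modeling, and that the containment is \emph{strict}. Throughout I take the 1-WL test on the (dynamic) EEG graph as the distinguishing oracle, in keeping with the setup of Section~\ref{sec:preliminary}, so that ``more expressive'' means ``able to separate a strictly larger family of non-isomorphic dynamic graphs.''

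For the inclusion direction, I would observe that the implicit setting of Definition~\ref{def:explicit_implicit_graph_construction} is exactly the special case of the explicit setting of Definition~\ref{def:explicit_graph_construction} obtained by forcing the map $f$ to be constant, i.e. $f(\bm{x}_{:,t})\equiv\widehat{\mathbf{A}}$ for all $t$. Hence any explicit model can simulate any implicit one, and any pair of EEG sequences separated by an implicit 1-WL dynamic GNN is separated by the corresponding explicit one. This gives $\text{Implicit}\preceq\text{Explicit}$ by a one-line simulation argument that I would state as a short lemma.

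For strictness I would exhibit a witness: two dynamic EEG graphs $\mathcal{G}=(\mathcal{A},\mathcal{X})$ and $\mathcal{G}'=(\mathcal{A}',\mathcal{X}')$ on a common node set that agree on everything an implicit model can see but disagree on their time-resolved connectivity. Concretely, I would choose identical node-feature trajectories and an identical time-aggregated structure $\widehat{\mathbf{A}}$ (for instance the union, or time-average, of the snapshots), while letting the per-snapshot adjacencies $\mathcal{A}_{:,:,t}$ and $\mathcal{A}'_{:,:,t}$ differ only in the temporal order in which edges are active. A minimal candidate uses three nodes with distinct features over two snapshots: edge $\{1,2\}$ then $\{2,3\}$ in $\mathcal{G}$, and the reverse order in $\mathcal{G}'$; both aggregate to the same path, so an implicit model receives literally identical inputs and must return identical representations, whereas the explicit model, running 1-WL on each true snapshot $\mathcal{A}_{:,:,t}$ and composing the colorings through its temporal cell, assigns node $1$ the trajectory ``linked-to-$2$ then isolated'' in $\mathcal{G}$ but ``isolated then linked-to-$2$'' in $\mathcal{G}'$. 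I would verify that the resulting multiset of temporally-composed colors differs, so the two graphs are explicitly distinguishable but implicitly indistinguishable, which together with the inclusion yields the strict separation.

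The main obstacle is constructing this witness so that both requirements hold simultaneously and robustly. On the one hand the node features and the static summary must coincide exactly, so that the implicit branch genuinely cannot react to the difference; on the other hand the temporal structures must be separated not merely up to dynamic-graph isomorphism but by 1-WL itself, which is known to be blind to certain symmetric configurations. In particular a naive temporal reversal of a fully symmetric snapshot collapses to the same color multiset, so I would build in enough node asymmetry (via distinct node features) to guarantee that 1-WL tracks \emph{which} pair is connected at \emph{which} time. A related delicate point is the coupling $\mathbf{A}_{:,:,t}=f(\bm{x}_{:,t})$ of Definition~\ref{def:explicit_graph_construction}: I would treat the time-varying adjacency as part of the given data $\mathcal{A}\in\mathbb{R}^{N\times N\times T}$ that the explicit model consumes and the implicit model discards, and check that the chosen $\widehat{\mathbf{A}}$ is recoverable identically from both instances regardless of how the implicit summary is formed. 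Handling this coupling cleanly, rather than the rest of the argument, is where I expect the real care to be needed.
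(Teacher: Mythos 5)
Your proposal is correct and follows the same two-part skeleton as the paper's proof in Appendix~\ref{proof_implicit_explicit}: a simulation argument giving $\mathcal{F}_{\mathrm{implicit}}\subseteq\mathcal{F}_{\mathrm{explicit}}$ (the paper likewise has the explicit model ignore temporal variability and reuse $\widehat{\mathbf{A}}$ at every step), followed by a separating pair of temporal graphs with identical node features but different time-resolved adjacency. Where you genuinely differ is in the witness. The paper perturbs the adjacency at a single time step $t'$ and then hedges: the compression $f(\{\mathbf{A}_{:,:,t}\}_{t=1}^{T})$ ``may yield'' the same $\widehat{\mathbf{A}}$ for both graphs, so its separation is conditional on the implicit summary happening to collapse the difference --- under time-averaging, for instance, a one-step change generically shifts $\widehat{\mathbf{A}}$ and the paper's pair would \emph{not} fool the implicit model. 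Your order-swapped construction (edge $\{1,2\}$ then $\{2,3\}$ versus the reverse) removes that hedge: the two sequences of snapshots are permutations of the same multiset, so union, sum, and time-average all provably coincide, and the implicit model is fooled by \emph{every} permutation-invariant aggregator; this is an unconditional and hence stronger separation. You also correctly flag a wrinkle the paper steps over silently: Definition~\ref{def:explicit_graph_construction} couples $\mathbf{A}_{:,:,t}=f(\bm{x}_{:,t})$ to the node features, which is formally inconsistent with any witness having identical features but differing adjacencies; your resolution --- treating $\mathcal{A}\in\mathbb{R}^{N\times N\times T}$ as given input data that the explicit model consumes and the implicit model discards --- is exactly the reading the paper's own proof adopts implicitly. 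Finally, your insistence on distinct node features is necessary, not merely prudent: with identical features your two temporal graphs become isomorphic under the relabeling $1\leftrightarrow 3$ (the multisets of degree trajectories coincide), so no model should separate them; the asymmetric features are what make the pair a legitimate witness, and you should keep that requirement explicit in the final write-up.
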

\begin{proof}
Let $\mathcal{F}_{\mathrm{implicit}}$ and $\mathcal{F}_{\mathrm{explicit}}$ denote the function classes expressible by implicit and explicit dynamic models, respectively. Since an explicit model can replicate any implicit model by ignoring time variations in adjacency, it follows that $\mathcal{F}_{\mathrm{implicit}} \subseteq \mathcal{F}_{\mathrm{explicit}}$. 
To show strict separation, we construct two temporal EEG graphs that share identical node features but differ in adjacency at a single time step. An implicit model compresses adjacency into a static representation, potentially making these graphs indistinguishable, while an explicit model processes time-varying adjacency and can distinguish them. Thus, $\mathcal{F}_{\mathrm{implicit}} \neq \mathcal{F}_{\mathrm{explicit}}$, proving $\mathcal{F}_{\mathrm{implicit}} \subset \mathcal{F}_{\mathrm{explicit}}$. The full proof is provided in Appendix~\ref{proof_implicit_explicit}.
\end{proof}

\begin{restatable}{lemma}{gttgat}[\graphthentime $\precneqq$ \timeandgraph]
\label{gttgat}
\timeandgraph is strictly more expressive than \graphthentime representation
family on $\mathcal{T}_{n, T, \mathcal{H}eta}$ as long as we use
1-WL GNNs.
\end{restatable}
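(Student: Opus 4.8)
The plan is to establish the strict separation \graphthentime $\exprlt$ \timeandgraph in two stages: first the containment $\mathcal{F}_{\mathrm{GTT}}\subseteq\mathcal{F}_{\mathrm{TAG}}$, and then a pair of temporal EEG graphs in $\mathcal{T}_{n,T,\mathcal{H}eta}$ that some \timeandgraph model separates while no \graphthentime model can. For the containment, the only structural difference between the two families is the recurrent argument fed to the cell: \graphthentime passes the raw previous embedding $\bm{h}^{node}_{i,t-1}$, whereas \timeandgraph passes $[\text{GNN}_\text{rc}^{L}(\mathbf{H}_{:,t-1},\mathbf{A}_{:,:,t})]_i$. Instantiating $\text{GNN}_\text{rc}$ as a $0$-layer (identity) message-passing network gives $[\text{GNN}_\text{rc}^{L}(\mathbf{H}_{:,t-1},\mathbf{A}_{:,:,t})]_i=\bm{h}^{node}_{i,t-1}$, so every \graphthentime recurrence is a special case of a \timeandgraph recurrence, yielding $\mathcal{F}_{\mathrm{GTT}}\subseteq\mathcal{F}_{\mathrm{TAG}}$.

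The key observation for the separation is that in \graphthentime the temporal recursion is node-local: unrolling the cell shows $\bm{h}^{node}_{i,T}$ is a function of the single-node sequence $(\bm{g}_{i,1},\dots,\bm{g}_{i,T})$, where $\bm{g}_{i,t}=[\text{GNN}_\text{in}^{L}(\mathbf{X}_{:,t},\mathbf{A}_{:,:,t})]_i$ is the per-snapshot embedding. The adjacency influences node $i$ only through $i$'s own snapshot embeddings and never mixes the historical states of distinct nodes across edges. Hence the node-level (and, after readout, the graph-level multiset) representation of any \graphthentime model is determined by the collection of per-node trajectories $\{(\bm{g}_{i,1},\dots,\bm{g}_{i,T})\}_{i\in\mathcal{V}}$. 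To defeat every \graphthentime model it therefore suffices to build two temporal graphs whose snapshot embeddings agree node-by-node at every $t$, i.e. whose per-snapshot $1$-WL colorings (with node features) coincide at each time step.

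For the separating pair I would take two dynamic graphs that are $1$-WL-indistinguishable snapshot-by-snapshot but whose coupling between past node states and present adjacency differs, using a $1$-WL-equivalent yet non-isomorphic final structure (for instance the two-triangles-versus-hexagon pair, both $2$-regular, or a CSL-type pair) on which a labeling induced by the earlier snapshot breaks the symmetry. Concretely, the features at earlier steps produce two ``colors'' of historical embeddings $\bm{h}^{node}_{\cdot,t-1}$, and the two graphs place these colored histories into neighborhoods so that the aggregated multiset $\{\!\{\bm{h}^{node}_{j,t-1}: j\in\mathcal{N}(i)\}\!\}$ seen through $\mathbf{A}_{:,:,t}$ differs for some node, even though features alone give identical snapshot colorings. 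A \timeandgraph model realizes exactly this aggregation via $\text{GNN}_\text{rc}$, so with an injective cell and readout it separates the pair, while by the previous paragraph no \graphthentime model can, giving the strict inclusion.

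The main obstacle is the construction itself, where two requirements pull against each other: (i) $\text{GNN}_\text{in}$ must yield identical snapshot colorings on both graphs at every $t$, so \graphthentime is provably blind; yet (ii) propagating the history through the differing adjacency via $\text{GNN}_\text{rc}$ must be genuinely separating \emph{within} $1$-WL power, not merely under a stronger GNN. The tension is that the feature labeling breaking the structural symmetry for $\text{GNN}_\text{rc}$ must not break the snapshot coloring for $\text{GNN}_\text{in}$. The cleanest route is to keep the separating information purely in the temporal arrangement of otherwise $1$-WL-equivalent snapshots, and then to invoke injectivity of the $1$-WL aggregation and readout (as in the GIN construction) so that the multiset discrepancy is certified to survive into the output.
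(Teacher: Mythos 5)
Your proposal is correct, and it takes a genuinely more demanding route than the paper's own proof. The paper's argument for this lemma consists \emph{only} of your first stage: it observes that \graphthentime passes the raw state $\bm{h}^{node}_{i,t-1}$ where \timeandgraph passes $[\text{GNN}_\text{rc}^{L}(\mathbf{H}_{:,t-1},\mathbf{A}_{:,:,t})]_i$, treats the former as a degenerate instantiation of the latter, and on that basis asserts that \graphthentime is a \emph{strict} subset; no separating instance is exhibited (the paper saves its explicit counterexample construction for the \timeandgraph versus \timethengraph lemma in Appendix~C.3). Your node-locality observation --- that unrolling the \graphthentime cell makes $\bm{h}^{node}_{i,T}$ a function of node $i$'s own snapshot-embedding trajectory $(\bm{g}_{i,1},\dots,\bm{g}_{i,T})$, so \graphthentime is blind whenever the per-snapshot 1-WL colorings coincide under a single node correspondence --- is precisely what is needed to turn the paper's assertion into an actual proof of strictness: a syntactic ``special case'' relation between architectures does not by itself rule out the two families expressing the same set of functions, so your plan is logically stronger than what the paper wrote down.

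The one step you leave open, the construction, is easier than you fear: the tension between requirements (i) and (ii) dissolves if the final snapshot is feature-uniform and all distinguishing information lives in \emph{earlier} features, exactly as your last sentence suggests. Concretely, take $T=2$ with six nodes; at $t_1$ both graphs are edgeless with features $a,a,a,b,b,b$; at $t_2$ all nodes carry the same feature, and graph~1 is the disjoint pair of triangles on $\{a,a,a\}$ and $\{b,b,b\}$ while graph~2 is the alternating $a,b,a,b,a,b$ hexagon. Snapshot by snapshot the two temporal graphs are 1-WL-identical (edgeless graphs with matching features at $t_1$; two $2$-regular feature-uniform graphs at $t_2$), so every \graphthentime model yields the same multiset of trajectories and the same $\mathbf{Z}$. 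In \timeandgraph, however, $\text{GNN}_\text{rc}$ at $t_2$ aggregates neighbor histories: an $a$-node sees the multiset $\{a,a\}$ in graph~1 but $\{b,b\}$ in graph~2, and an injective (GIN-style) aggregation, cell, and readout propagate this discrepancy to distinct final representations. With that pair fixed, your argument is complete and, as a proof of the stated lemma, strictly more rigorous than the paper's.
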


\begin{proof}
By Definition \ref{def:graph-then-time}, 
\(\bm{h}_{i, t - 1}\)  is passed without this additional GNN (i.e., $ \text{GNN}_\text{rc}^{L}(\cdot)$) to learn interactions between EEG snapshots.
This results in a simpler form of temporal representation compared to \timeandgraph:
$
    \bm{h}_{i, t - 1} \subseteq \left[\text{GNN}_\text{rc}^{L}\big( \textbf{H}_{:, t - 1}, \textbf{A}_{:, :, t} \big)\right]_{i}.
$
\graphthentime is a strict subset of \timeandgraph in terms of expressiveness.
\end{proof}

\begin{restatable}{lemma}{tagttg}[\timeandgraph $\precneqq$ \timethengraph]
\label{tagttg}
\timethengraph is strictly more expressive than \timeandgraph representation
family on $\mathbb{T}_{n, T, \theta}$, as \timethengraph outputs different representations, while \timeandgraph does not.

\end{restatable}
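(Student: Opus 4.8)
The plan is to establish the claim through two inclusions: first that \timethengraph is at least as expressive as \timeandgraph, i.e. $\mathcal{F}_{\mathrm{TAG}} \subseteq \mathcal{F}_{\mathrm{TTG}}$, and then that this inclusion is strict by exhibiting a pair of dynamic EEG graphs in $\mathbb{T}_{n, T, \theta}$ that \timethengraph separates but \timeandgraph provably cannot. For the containment I would unroll the \timeandgraph recursion of Definition~\ref{def:time-and-graph}: since $\bm{h}^{node}_{i, T}$ is produced by interleaving 1-WL message passing with the RNN cell, it is a function only of the node-feature and adjacency histories within node $i$'s temporal–spatial receptive field. Choosing injective $\text{RNN}^\text{node}$ and $\text{RNN}^\text{edge}$ in Definition~\ref{def:time-then-graph} preserves the full node and edge trajectories losslessly, so the static attributed graph handed to $\text{GNN}^{L}$ retains all information \timeandgraph ever consumes, and an expressive 1-WL GNN on this enriched graph can reproduce the \timeandgraph computation tree. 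This reuses the hierarchy of \citet{time-then-graph}, here adapted to carry node features throughout.

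For the strict separation I would construct an explicit counterexample on $n=4$ nodes and $T=2$ snapshots, using identical constant node features in both graphs so that $\text{RNN}^\text{node}$ is identical and the entire distinction is forced onto the edge evolution. Both snapshots in each graph are chosen $2$-regular (i.e. $4$-cycles), where I write $ij$ for the undirected pair $\{i,j\}$: graph $\mathcal{G}^{(1)}$ uses edge set $\{12,23,34,41\}$ at both steps, whereas $\mathcal{G}^{(2)}$ uses $\{12,23,34,41\}$ at $t=1$ and the differently labeled cycle $\{13,23,24,14\}$ at $t=2$. On the \timeandgraph side I would argue by induction on $t$ that the hidden states are node-uniform in both graphs: with constant features and $2$-regular snapshots, both $\text{GNN}_\text{in}^{L}$ and $\text{GNN}_\text{rc}^{L}$ return the same value at every node at every step, so the RNN cell produces a node-uniform $\mathbf{H}_{:,t}$, giving $\mathbf{Z}^{(1)} = \mathbf{Z}^{(2)}$ for every admissible parameterization.

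On the \timethengraph side the per-edge trajectories differ: edge $12$ has history $(1,1)$ in $\mathcal{G}^{(1)}$ but $(1,0)$ in $\mathcal{G}^{(2)}$, while edges $13$ and $24$ appear only in $\mathcal{G}^{(2)}$ with history $(0,1)$. Hence $\text{RNN}^\text{edge}$ yields distinct edge embeddings, turning $\mathcal{G}^{(1)}$ into a labeled $4$-cycle (every node of degree $2$) and $\mathcal{G}^{(2)}$ into $K_4$ with mixed edge labels (every node of degree $3$). These static attributed graphs carry an immediate 1-WL certificate of non-isomorphism at every node, so the final $\text{GNN}^{L}$ outputs $\mathbf{Z}^{(1)} \neq \mathbf{Z}^{(2)}$, witnessing $\mathcal{F}_{\mathrm{TAG}} \subsetneq \mathcal{F}_{\mathrm{TTG}}$ and thus \timeandgraph $\precneqq$ \timethengraph.

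The main obstacle is the \timeandgraph blindness claim, which must hold for \emph{all} admissible parameterizations rather than a single one; this reduces to showing that the interleaved 1-WL refinement stays node-uniform across every snapshot in both graphs, and it is precisely here that the per-snapshot $2$-regularity together with the identical constant node features are essential, which I would make rigorous via the induction sketched above. A secondary subtlety is verifying that the two \timethengraph static graphs are genuinely non-isomorphic as edge-attributed graphs, so that a plain 1-WL GNN already suffices; this follows because the incident-degree (equivalently, the incident edge-trajectory label multiset) differs at every node, but it should be checked explicitly to ensure the separation does not rely on a stronger-than-1-WL GNN.
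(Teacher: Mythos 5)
Your proof is correct and follows the same two-step skeleton as the paper's: (i) containment, obtained by letting injective sequence encoders preserve the full node/edge trajectories so that the final GNN can simulate any \timeandgraph computation (both you and the paper delegate this direction to \citet{time-then-graph}), and (ii) strict separation via a pair of temporal graphs with identical uniform node features and regular snapshots, on which the interleaved 1-WL refinement of \timeandgraph provably stays node-uniform while the edge-trajectory attributes of \timethengraph break the tie. Where you genuinely diverge is the counterexample itself. The paper uses $n=8$, $T=2$ with the two second snapshots being \emph{structurally different} 2-regular graphs ($\mathcal{C}_{8,1}$, the 8-cycle, versus $\mathcal{C}_{8,2}$, two disjoint 4-cycles), following the dynamic-CSL construction of \citet{time-then-graph}; your example uses $n=4$ with the two second snapshots \emph{isomorphic} (both 4-cycles) but differently aligned against the first snapshot, so non-isomorphism arises purely from temporal alignment of edges rather than from snapshot-level structure. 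Your version buys two things: it is minimal, and the final 1-WL separation of the aggregated attributed graphs is trivially certified (aggregated degree 2 with uniform label $(1,1)$ everywhere versus degree 3 with label multiset $\{(1,0),(0,1),(1,1)\}$ at every node), whereas the paper's closing step ``1-WL GNNs can distinguish graphs with different edge attributes'' is stated without this explicit certificate; it also shows the stronger qualitative point that \timeandgraph fails even when every snapshot is isomorphic to every other. The paper's version, in turn, illustrates that \timeandgraph is blind even to genuine structural change between snapshots (connectivity itself changes at $t_2$). Your node-uniformity induction is sound as stated — permutation equivariance plus 2-regularity and constant features force uniform outputs of $\text{GNN}_\text{in}^{L}$, $\text{GNN}_\text{rc}^{L}$, and the cell at every step, for every admissible parameterization — and your degree check on the aggregated graphs also settles temporal non-isomorphism of the pair, since any temporal isomorphism would induce an isomorphism of the aggregated edge-attributed graphs.
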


In Appendix \ref{proof_ttg}, we prove 
Lemma \ref{tagttg} using both node and edge representation perspectives (based on Lemma \ref{lemma:expressiveness_with_node_and_edge} in Appendix \ref{proof_representaiton}) to hold Theorem \ref{tgtotal}.
Notably, we provide a synthetic EEG task where any \timeandgraph representation fails, while a \timethengraph approach succeeds.
\timethengraph learns node and edge features across time steps to capture temporal dependencies.
This is done by encoding the temporal adjacency matrices $\textbf{A}_{:, :, \leq t}$ and node features $\textbf{X}_{:, \leq t}$ together, enabling the model to distinguish between graphs with distinct temporal structures. 
However, \timeandgraph handles each time step independently, leading to identical representations across time.

\begin{restatable}{theorem}{tgtotal}[Temporal EEG Graph Expressivity]
\label{tgtotal}
Based on Lemmas \ref{gttgat} and \ref{tagttg}, we conclude that \graphthentime is strictly less expressive than \timeandgraph, and \timeandgraph is strictly less expressive than \timethengraph on $\mathbb{T}_{n, T, \theta}$, when the graph representation is a 1-WL GNN:
\begin{equation}   
\begin{aligned}
\text{\graphthentime} 
&\precneqq_{\mathbb{T}_{n, T, \theta}} \text{\TAG} 
&\precneqq_{\mathbb{T}_{n, T, \theta}} \text{\TTG}.
\end{aligned}
\end{equation}
\end{restatable}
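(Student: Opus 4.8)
The plan is to prove Theorem~\ref{tgtotal} purely by composing the two preceding lemmas, since the statement is exactly a transitive chaining of the pairwise separations they establish. First I would pin down the formal meaning of the relation $\precneqq$ on representation families: writing $\mathcal{A}\precneqq\mathcal{B}$ asserts both (i) every temporal-graph function realizable by family $\mathcal{A}$ is also realizable by $\mathcal{B}$ (equivalently, the partition of $\mathbb{T}_{n,T,\theta}$ into distinguishable/indistinguishable classes induced by $\mathcal{A}$ is a coarsening of the one induced by $\mathcal{B}$), and (ii) there exists at least one pair of non-isomorphic temporal EEG graphs in $\mathbb{T}_{n,T,\theta}$ that $\mathcal{B}$ separates but $\mathcal{A}$ does not. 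Under this reading $\precneqq$ decomposes cleanly into a containment part $\preceq$ and a strict-separation witness, mirroring the two-part argument already used for Theorem~\ref{theorem:expressiveness_dynamic_strucutre}.

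Next I would invoke the two lemmas as black boxes. Lemma~\ref{gttgat} supplies $\mathcal{F}_{\mathrm{GTT}}\preceq\mathcal{F}_{\mathrm{TAG}}$ together with a witness $G_1\neq G_2$ that $\timeandgraph$ distinguishes but $\graphthentime$ does not; Lemma~\ref{tagttg} supplies $\mathcal{F}_{\mathrm{TAG}}\preceq\mathcal{F}_{\mathrm{TTG}}$ together with a witness that $\timethengraph$ distinguishes but $\timeandgraph$ does not. The containment part is transitive as plain set inclusion of function classes, so $\mathcal{F}_{\mathrm{GTT}}\preceq\mathcal{F}_{\mathrm{TAG}}\preceq\mathcal{F}_{\mathrm{TTG}}$ immediately yields $\mathcal{F}_{\mathrm{GTT}}\preceq\mathcal{F}_{\mathrm{TTG}}$. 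For the strictness of each displayed link, the two witnesses are already in hand, so no new construction is needed; I would additionally remark that the end-to-end strictness $\graphthentime\precneqq\timethengraph$ also follows, because the $G_1\neq G_2$ witness from Lemma~\ref{gttgat} is separated by $\timeandgraph$ and hence, via the containment $\mathcal{F}_{\mathrm{TAG}}\preceq\mathcal{F}_{\mathrm{TTG}}$, also by $\timethengraph$, while remaining indistinguishable under $\graphthentime$.

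The main obstacle is not the chaining logic but ensuring the two lemmas' hypotheses are mutually compatible so that transitivity is legitimate. Concretely I would verify three conditions. First, that both lemmas are asserted over the same family of temporal graphs: the excerpt states Lemma~\ref{gttgat} over $\mathcal{T}_{n,T,\mathcal{H}eta}$ but Lemma~\ref{tagttg} and the theorem over $\mathbb{T}_{n,T,\theta}$, a notational mismatch I would reconcile by fixing one common space $\mathbb{T}_{n,T,\theta}$ and checking that each separating witness genuinely lives in it. Second, that the graph-representation backbone is held fixed as a $1$-WL GNN throughout, since the containment direction of each lemma relies on this shared expressive ceiling; swapping backbones between the two links would break the inclusion chain. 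Third, that the witnesses respect the node-feature-aware setting emphasized in the Remark, so the separations are not artifacts of discarding node attributes. Once these compatibility checks pass, the theorem is immediate: transitivity of $\preceq$ together with either retained witness establishes both strict relations in the displayed chain.
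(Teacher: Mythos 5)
Your proposal is correct and matches the paper's treatment: Theorem~\ref{tgtotal} is established there exactly as you describe, by chaining Lemma~\ref{gttgat} and Lemma~\ref{tagttg} (each providing a containment plus a separating witness under a fixed 1-WL GNN backbone), with all separation work done inside the lemmas' proofs rather than in the theorem itself. Your compatibility checks---including reconciling the typographical mismatch between $\mathcal{T}_{n,T,\mathcal{H}eta}$ and $\mathbb{T}_{n,T,\theta}$ and confirming the node-feature-aware setting of the Remark---are sound and, if anything, slightly more careful than the paper's implicit handling.
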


We confirm this conclusion in our experiments of both seizure detection and early prediction tasks.

\section{Proposed Method - EvoBrain}
    \label{sec:moethod}
    Based on the above analysis, this section presents our \method, which is built on top of explicit dynamic modeling and \TTG architecture to represent temporal EEG structures.

\subsection{Explicit Dynamic Brain Graph Structure}
\label{subsec:dynamicgraph}

Based on Theorem \ref{theorem:expressiveness_dynamic_strucutre}, we propose to construct EEG graphs for each snapshot instead of constructing a single static graph from the entire EEG recording. 
We first segment an EEG epoch into short time durations (i.e., snapshots) at regular intervals and compute channel correlations to construct a sequence of graph structures.
Specifically, for the $t$-th snapshot, we define the edge weight $a_{i,j,t}$ as the weighted adjacency matrix $\mathcal{A}$, computed as the absolute value of the normalized cross-correlation between nodes $v_i$ and $v_j$.
To prevent information redundancy and create sparse graphs, we rank the correlations among neighboring nodes and retain only the edges with the top-$\tau$ highest correlations.
\[
a_{i,j,t} = |x_{i,t} * x_{j,t}|,\quad  \ \text{if } v_j \in \mathcal{N}(v_i), \ \text{else } 0,
\]
where $x_{i,:,t}$ and $x_{j,:,t}$ represent  $v_i$ and $v_j$ channels of $t$-th EEG snapshot.
$*$ denotes the normalized cross-correlation operation.
$\mathcal{N}(v_i)$ denotes the set of top-$\tau$ neighbors of $v_i$ with higher correlations.
After computing this for $T$ snapshots, we obtain a sequence of directed, weighted EEG graphs $\mathbf{G}$ to represent brain networks at different time points. 
In other words, the dynamic nature of the EEG is captured by the evolving structure of these graphs over time.

\subsection{Dynamic GNN in Time-Then-Graph Model}
\label{subsec:modelarc}
Based on Theorem \ref{tgtotal}, we propose a novel \TTG model, where two-stream Mamba \cite{Mamba} learn the temporal evolution of node and edge attributes independently, followed by Graph Convolutional Networks (GCN) to capture spatial dependencies across electrodes in a static graph. 
This method effectively captures the temporal and spatial dynamics inherent in EEG data for seizure detection and prediction.
Notably, the model input is not the raw EEG signals but their \textbf{frequency} representation. 
Here, clinical seizure analysis aims to identify specific frequency oscillations and waveforms, such as spikes \citep{PostProcess1}.
To effectively capture such features, 
we apply a short-term Fourier transform (STFT) to the EEG signal, 
retaining the log amplitudes of the non-negative frequency components, following prior studies \citep{pmlr-seizure, seizurenet, GNN_ICLR22}.
The EEG frequency snapshots are then normalized using z-normalization across the training set. 
Consequently, an EEG frequency representation with a sequence of snapshots is formulated as $\displaystyle \mathcal{X} \in \mathbb{R}^{N\times T \times d }$, serving $N$ node initialization and dynamic graph construction.

\subsubsection{Temporal Modeling with Mamba}
We introduce a two-stream Mamba framework, with separate processes for node and edge attributes to capture their temporal dynamics.
Given a dynamic EEG graph $\mathcal{G}$, 
Mamba can be interpreted as a linear RNN with selective state updates, making it suitable for modeling brain dynamics that involve both short-term and long-term memory processes. 
In the brain, short-term memory maintains transient information over brief intervals, while long-term memory spans extended timescales, enabling the integration of past experiences into current processing.

For a traditional RNN processing sequence $\{\mathbf{x}_t\}_{t=1}^T$, the hidden state update is:
$\mathbf{h}_t = \sigma(\mathbf{W}\mathbf{h}_{t-1} + \mathbf{U}\mathbf{x}_t),$
where the weight matrices $\mathbf{W}$ and $\mathbf{U}$ representing synaptic connectivity between neurons are fixed across time $t$.
However, the synaptic weights are constantly changing, controlled in part by chemicals such as neuromodulators \citep{Synaptic_plasticity}.
This limits the model's ability to evolve information over long sequences \citep{NEURIPS2024_neuromodulation}.
In contrast, Mamba can be viewed as a linear RNN with input-dependent parameters for each element $\bf{x}^e$ (node feature $\bf{x}_{i,t}$ or weighted adjacency matrix $\bf{a}_{i,j,t}$):
\begin{equation}
\begin{aligned}
\Delta_t^e &= \tau_\Delta(f_\Delta^e(\mathbf{x}_t^e)), \quad
\mathbf{B}_t^e = f_B^e(\mathbf{x}_t^e), \quad
\mathbf{C}_t^e = f_C^e(\mathbf{x}_t^e), \\
\mathbf{h}_t^e &= \underbrace{(1 - \Delta_t^e\cdot\mathbf{D})}_{\text{selective forgetting}} \mathbf{h}_{t-1} + \underbrace{\Delta_t^e\cdot\mathbf{B}_t^e}_{\text{selective update}}\mathbf{x}_t^e,  \quad
\mathbf{y}_t^e = \mathbf{C}_t^e\mathbf{h}_t^e,
\end{aligned}
\label{eqn:mamba-updates}
\end{equation}
where $f_*$ are learnable projections for important frequency bands or edge connectivity. The hidden state $\mathbf{h}_t^e$ encodes the evolving neural activity, serving as a substrate for both transient (working) memory and more persistent (long-term) memory traces. 
The softplus activation $\tau_\Delta$ guarantees that the gating variable $\Delta_t^e$ remains positive, thereby modulating the trade-off between retaining previous neural states and incorporating new inputs. 
Specifically, the forgetting term 
$(1 - \Delta_t^e\cdot\mathbf{D})$
implements a selective mechanism analogous to \textbf{synaptic decay or inhibitory processes} that diminish outdated or irrelevant information. 
Conversely, the update term 
$\Delta_t^e\cdot\mathbf{B}_t^e$
mirrors \textbf{neuromodulatory gating} (e,g, via dopamine signaling) that selectively reinforces and integrates salient new information. 
The projection $\mathbf{C}_t^e$ translates the internal neural state into observable outputs.
The final hidden states $\mathbf{h}_i^\text{node} = \mathbf{y}_{i,T}^\text{node}$ and $\mathbf{h}_{ij}^\text{edge} = \mathbf{y}_{ij,T}^\text{edge}$ capture the temporal evolution of nodes and edges.

\subsubsection{Spatial Modeling with GCN and LapPE}
We adapt Graph Convolutional Network (GCN) and Laplacian Positional Encoding (LapPE) \citep{lappe} for efficiently modeling spatial dependencies in EEG signals. 

\mypara{Capturing Brain Network Topology}
Recent studies of neuroscience reveal that particular functions are closely associated with specific brain regions (e.g., Neocortex or Broca’s area) \citep{neocortex, connectivity}.
However, due to the fundamental computation of GNNs, nodes that are structurally equivalent under graph automorphism receive identical representations, effectively losing their individual identities \citep{lappe}.
In order to reflect the spatial specificity of the brain within spatial modeling, we introduce Laplacian Positional Encoding (LapPE).
Given edge feature $\mathbf{H}^{edge}$ the normalized Laplacian $\mathbf{L}$ is defined as:
$
    \mathbf{L} = \mathbf{I} - (\mathbf{D}^{-1/2}\mathbf{A'}\mathbf{D}^{-1/2})  = \mathbf{U}\mathbf{\Lambda}\mathbf{U}^\top, \quad
\mathbf{A'} = \tau_\text{edge}(f_{edge}(\mathbf{\mathbf{H}}^{edge})),    
$
where $\tau_\text{edge}$ is the softplus function, $f_{edge}$ is the learnable projection, 
$\mathbf{A'}$ is a weighted adjacency matrix $\mathbf{A'}$, and  $\mathbf{D}$ is a degree matrix, diagonal with $\mathbf{D} = \{\sum_j \mathbf{a'}_{i,j}\}_{i=1}^N$.
$\mathbf{I} \in \mathbb{R}^{N \times N}$ is the identity matrix, and $\mathbf{D}^{-1/2}$ is the element-wise inverse square root of the degree matrix $\mathbf{D}$. Performing an eigendecomposition on $\mathbf{L}$ yields.
The columns of $\mathbf{U} \in \mathbb{R}^{N \times N}$ are the eigenvectors of $\mathbf{L}$, and $\mathbf{\Lambda}$ is a diagonal matrix containing the corresponding eigenvalues. 

To capture the global geometry of the network, we select the first $K$ eigenvectors corresponding to the smallest eigenvalues. For node $i$, its Laplacian-based positional encoding $\mathbf{p}_i \in \mathbb{R}^K$ is given by
$
    \mathbf{p}_i = [\mathbf{u}_1[i], \mathbf{u}_2[i], \dots, \mathbf{u}_K[i]]^\top, \quad
    \mathbf{x}_i^{node} = [\mathbf{h}_i^\text{node}; \mathbf{p}_i],
$
where $\mathbf{u}_k[i]$ denotes the $i$-th component of the $k$-th eigenvector and $[;]$ indicates vector concatenation.

\mypara{Modeling EEG Spatial Dynamics}
We then adapt a GCN to learn spatial dependencies.
These graph embeddings capture the temporal evolution of EEG snapshots, with each snapshot reflecting the brain state at that particular time. 
Each layer of the GCN updates the node embeddings by aggregating information from neighboring nodes and edges. 
The node embeddings are updated as follows:

$
\label{eqn:gcn-layer}
\mathbf{h}_i^{(l+1)} = \sigma\left( (\mathbf{D}^{-1/2}\mathbf{A'}\mathbf{D}^{-1/2}) \mathbf{h}_j^{(l)} \mathbf{\Theta}^{(l)} \right),
$
where \( \mathbf{h}_i^{(l)} \) is the embedding of node \( i \) at layer \( l \) and \( \mathbf{h}_i^{(0)} = \mathbf{x}_i^{node}.\)
\( \mathbf{\Theta}^{(l)} \) is the learnable weight matrix at layer \( l \), and \( \sigma \) is an activation function (e.g., ReLU).
Afterward, we apply max pooling over the node embeddings, i.e., $\mathbf{Z} = \mathbf{h}_i^{(L)}$, followed by a fully connected layer and softmax activation for seizure detection and prediction tasks.

\section{Experiments and Results}
    \label{sec:exp}
    
\subsection{Experimental Setup}\label{subsec:data}

\noindent \textbf{Tasks.}
In this study, we focus on two tasks: seizure detection and seizure early prediction.

\begin{itemize}[left=0pt]
    \item \textbf{Seizure detection} is framed as a binary classification problem, where the goal is to distinguish between seizure and non-seizure EEG segments, termed epochs. 
    This task serves as the foundation for automated seizure monitoring systems.
    \item \textbf{Seizure early prediction} is the more challenging and clinically urgent task.    
    It aims to predict the onset of an epileptic seizure before it occurs. 
    Researchers typically frame this task as a classification problem \citep{PostProcess2,PostProcess3}, where the goal is to distinguish between pre-ictal EEG epochs and the normal state. 
    Accurate classification enables timely patient warnings or preemptive interventions, such as electrical stimulation, to prevent or mitigate seizures.
    The seizure prediction task is defined as a classification problem between inter-ictal (normal) and pre-ictal states \citep{PostProcess2}. 
However, there is no clear clinical definition regarding its onset or duration of pre-ictal state \citep{lopes2023removing}.
So it is defined as a fixed duration before the seizure occurrence \citep{PostProcess3}.
This duration is chosen to account for the time required for stimulation by implanted devices \citep{cook2013prediction} and to allow for seizure preparation. In this study, we define the pre-ictal state as one minute, providing adequate time for effective electrical stimulation to mitigate seizures or minimal preparation.
Data labeled as seizures were discarded, and a five-minute buffer zone around the boundary data was excluded from the analysis. The remaining data were used as the normal state.
\end{itemize}

\noindent \textbf{Datasets.}
We used the Temple University Hospital EEG Seizure dataset v1.5.2 (\textbf{TUSZ}) \citep{shah2018temple} to evaluate \method. 
Data description can be found in Appendix \ref{data}.
TUSZ is the largest public EEG seizure database, containing 5,612 EEG recordings with 3,050 annotated seizures. 
Each recording consists of 19 EEG channels. 
Additionally, we used the smaller CHB-MIT dataset, which consists of 844 hours of 22-channel scalp EEG data from 22 patients, including 163 recorded seizure episodes.
For the TUSZ dataset, we followed the official data split, in which a subset of patients is designated for testing. Similarly, for the CHB-MIT dataset, we used randomly selected 15\% of the patient's data for testing. Hyperparameters and augmentation strategies were set following prior studies and can be found in Appendix D.
\emph{Preprocessing:}
For the early prediction task, we defined the \emph{one-minute period} before a seizure as the preictal phase, implying the ability to predict seizures up to one minute in advance. 

\noindent \textbf{Baselines.}
We selected four \DyGNN studies as baselines: EvolveGCN-O \citep{AAAI20_EvolveGCN}, which follows the \GTT architecture, and a \TAG work, DCRNN \citep{GNN_ICLR22}.
\TTG approach, GRAPHS4MER \citep{graphs4mer} and GRU-GCN \cite{time-then-graph} are \TTG architectures. 
We included EEG foundation models, BIOT \citep{BIOT}, LaBraM \citep{LaBraM_iclr2024}, and EEGPT \citep{wang2024eegpt}. LaBraM and EEGPT were fine-tuned from the publicly available base model checkpoint.
We also evaluated LSTM \citep{LSTM} and CNN-LSTM \citep{CNN-LSTM} as referenced in \citet{GNN_ICLR22} and conventional methods, Support Vector Machine (SVM) and Random Forest.

\noindent \textbf{Metrics.}
We used the Area Under the Receiver Operating Characteristic Curve (AUROC) and F1 score as evaluation metrics. 
AUROC considers various threshold scenarios, providing an overall measure of the model's ability to distinguish between classes. 
The F1 score focuses on selecting the best threshold by balancing precision and recall.
All results are averaged over five runs with different random seeds.
Experimental details are provided in Appendix \ref{section: training}.

\subsection{Results}

\begin{table*}[t]
\centering
\caption{Performance comparison of TUSZ dataset on seizure detection and prediction for 12s and 60s.
The \textbf{best} and \underline{second best} results are highlighted.}
\label{tab:method_comparison}
\resizebox{\textwidth}{!}{\begin{tabular}{l|cccc|cccc}
\toprule
& \multicolumn{4}{c}{Detection} & \multicolumn{4}{c}{Prediction} \\
        \cmidrule(lr){2-5} \cmidrule(lr){6-9}
& \multicolumn{2}{c}{12s} & \multicolumn{2}{c}{60s} & \multicolumn{2}{c}{12s} & \multicolumn{2}{c}{60s} \\
        \cmidrule(lr){2-3} \cmidrule(lr){4-5} \cmidrule(lr){6-7} \cmidrule(lr){8-9}
Method             & AUROC & F1  & AUROC   & F1   & AUROC & F1   & AUROC   & F1   \\
\midrule
SVM & 0.765 ±0.004 & 0.369 ±0.007 & 0.720 ±0.017 & 0.390 ±0.019 & 0.566 ±0.016 & 0.320 ±0.037 & 0.561 ±0.025 & 0.312 ±0.029 \\
Random Forest & 0.778 ±0.004 & 0.354 ±0.005 & 0.739 ±0.031 & 0.386 ±0.038 & 0.566 ±0.032 & 0.344 ±0.037 & 0.550 ±0.037 & 0.330 ±0.037 \\
LSTM  & 0.794 ±0.006 & 0.381 ±0.019 & 0.720 ±0.014 & 0.392 ±0.012 & 0.572 ±0.024 & 0.353 ±0.011 & 0.559 ±0.026 & 0.393 ±0.025 \\
CNN-LSTM   & 0.754 ±0.009 & 0.356 ±0.008 & 0.680 ±0.007 & 0.331 ±0.016 & 0.621 ±0.018 & 0.389 ±0.010 & 0.528 ±0.020 & 0.316 ±0.028 \\
BIOT    & 0.726 ±0.016 & 0.320 ±0.018 & 0.651 ±0.024 & 0.280 ±0.013 & 0.576 ±0.019 & 0.425  ±0.016& 0.574 ±0.006 & 0.388 ±0.008 \\
LaBraM    & 0.825 ±0.003 & 0.472 ±0.013 & 0.793 ±0.002 & \underline{0.469 ±0.010} & 0.661 ±0.003 & \textbf{0.482  ±0.010}& \textbf{0.669 ±0.014} & \textbf{0.413 ±0.020} \\
EEGPT    & 0.803 ±0.005 & 0.415 ±0.011 & 0.743 ±0.003 & 0.406 ±0.012 & \underline{0.672 ±0.005} & 0.465 ±0.012& 0.610 ±0.018& 0.396 ±0.022\\
EvolveGCN   & 0.757 ±0.004 & 0.343 ±0.012 & 0.670 ±0.017 & 0.340 ±0.015 & 0.622 ±0.006 & 0.437 ±0.010 & 0.531 ±0.020 & 0.344 ±0.019 \\
DCRNN  & 0.817 ±0.008 & 0.415 ±0.039 & 0.808 ±0.014 & 0.435 ±0.019 & 0.634 ±0.021 & 0.401 ±0.024 & 0.601 ±0.031 & 0.397 ±0.029 \\
GRAPHS4MER  &0.833 ±0.005 & 0.413 ±0.017 & 0.778 ±0.021 & 0.439 ±0.012 & 0.632 ±0.021 & 0.438 ±0.022 & 0.638 ±0.025 & 0.355 ±0.031 \\
GRU-GCN  &  \underline{0.856 ±0.009} & \underline{0.505 ±0.009} & \underline{0.822 ±0.013} & 0.438 ±0.014 & 0.659 ±0.020 & 0.453 ±0.012 & 0.601 ±0.028 & 0.392 ±0.017 \\
\textbf{\method (Ours)} & \textbf{0.877 ±0.005} & \textbf{0.539 ±0.009} & \textbf{0.865 ±0.009 } & \textbf{0.483 ±0.006 } & \textbf{0.675 ±0.015 } & \underline{0.470 ±0.003 } & \underline{0.651 ±0.023} & \underline{0.401 ±0.023} \\

\bottomrule
\end{tabular}}
\end{table*}

\textbf{Main Results.} Table \ref{tab:method_comparison} presents a performance comparison of seizure detection and prediction for the TUSZ dataset using various models over 12-second and 60-second windows.
\method consistently outperforms dynamic GNN baselines and achieves competitive performance to the large-scale foundation model, LaBraM, while using \textbf{30 times fewer parameters}.
For the seizure detection in a 12-second window data, \method shows a 14\% increase in AUROC and 12\% increase in F1 score compared with LaBraM. 
For the 60-second window data, \method shows a 23\% increase in AUROC and 30\% ($0.670 \rightarrow 0.865$) increase in F1 score  ($0.340 \rightarrow 0.483$) compared with EvolveGCN. 
All \TTG models demonstrate relatively high performance, supporting our theoretical analysis and conclusion of Theorem \ref{tgtotal} in Section \ref{sec:preliminary}.

\begin{wrapfigure}[12]{r}{6.5cm}
\vspace{-0.15in}
  \centering
\includegraphics[width=1.0\linewidth]{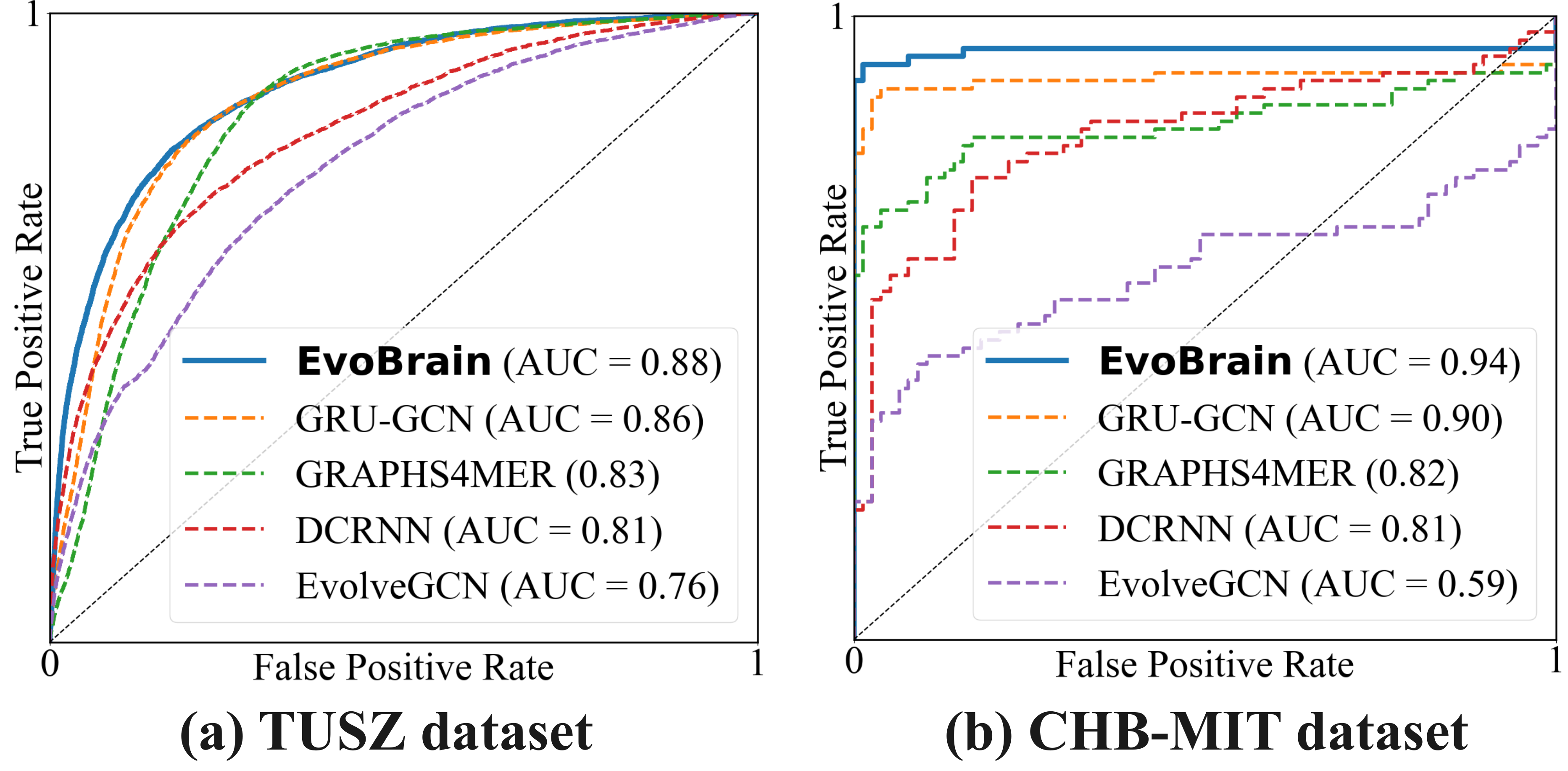}
\caption{ROC curve results for the 12-second seizure detection task on two datasets.}
  \label{fig:roc}
\end{wrapfigure}
\noindent Figure \ref{fig:roc} shows the ROC curves results comparing \method with other dynamic GNN models on seizure detection task. 
In subfigure (a), for the TUSZ dataset, \method achieves an AUC of 0.88, outperforming DCRNN (0.81) and EvolveGCN (0.76). 
Our ROC curve is positioned higher, indicating a stronger ability to differentiate between seizure and non-seizure events.
In subfigures (b) for the CHB-MIT dataset, \method achieves an AUC of 0.94, significantly higher than the 0.81 and 0.59 of \TAG and \GTT approaches, respectively.
The results show the effectiveness of \TTG for identifying seizures.

\begin{wrapfigure}[15]{r}{6.5cm}
\vspace{-0.3in}
  \centering
\includegraphics[width=1\linewidth]{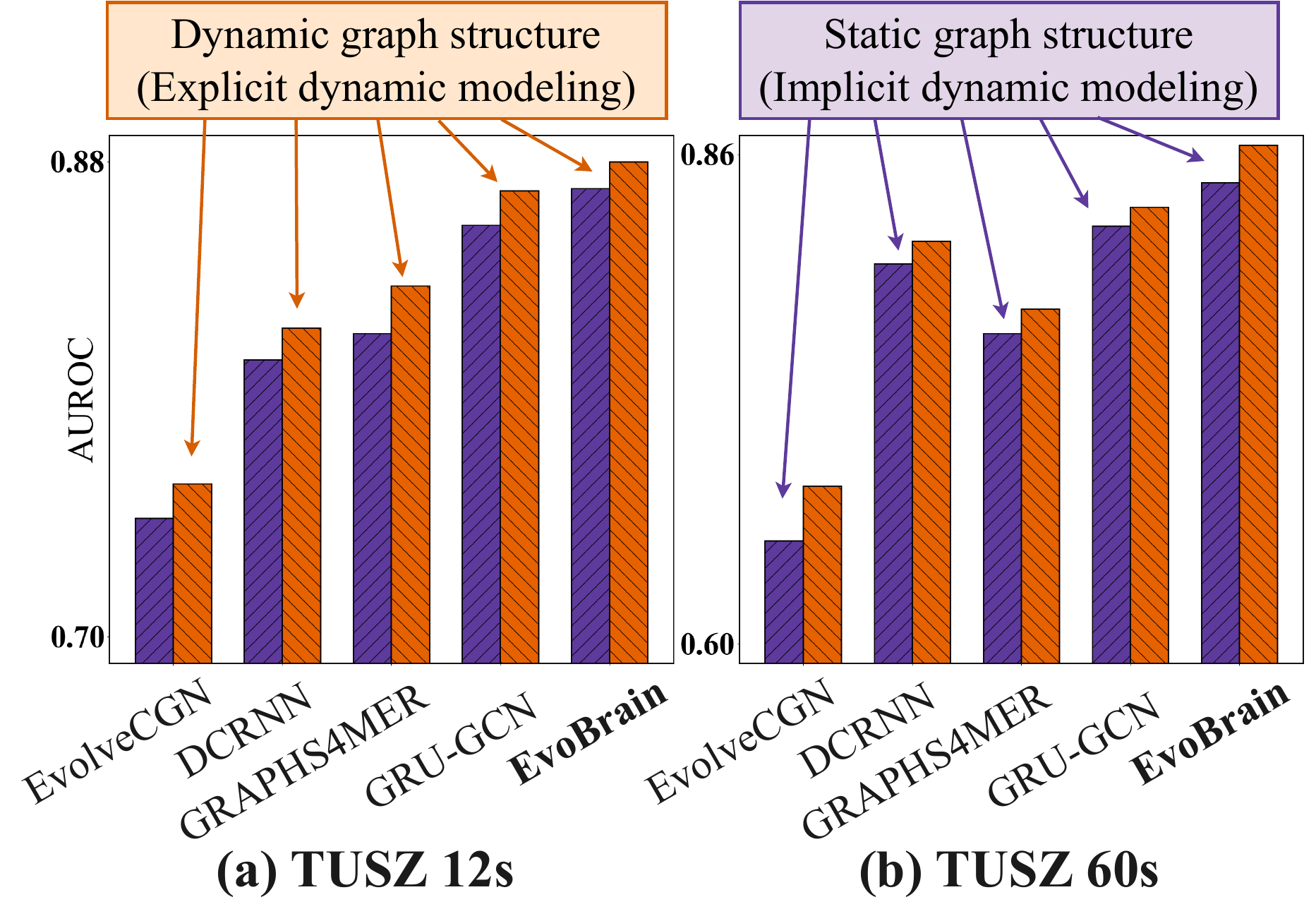}
  \caption{Comparison of the proposed dynamic graph structure and the static structure. 
  }
  \label{fig:structure}
\end{wrapfigure}

\noindent\textbf{Dynamic Graph Structure Evaluation.}
Figure \ref{fig:structure} shows the effectiveness of our proposed dynamic graph structure (i.e,. explicit dynamic modeling) compared to the static graph structure (i.e., implicit dynamic modeling) commonly used in existing work \citep{GNN_AAAI23, GNN_ICLR22}. 
The {\color{myPurple}purple} bar shows the performance of the static graph structure, while the {\color{orange}orange} bar represents the results when the static graph is replaced with our dynamic graph structures. 
As seen, the improvements are \textbf{not limited} to our \TTG method but also enhance the performance of all \DyGNN approaches. The figure highlights the effectiveness and necessity of dynamic graphs in capturing brain dynamics.
The results imply that modeling temporal dynamics in EEGs should incorporate various channel connectivity or structural information, supporting our theoretical analysis and conclusion of Theorem \ref{theorem:expressiveness_dynamic_strucutre} in Section \ref{sec:preliminary}.

\noindent\textbf{Computational Efficiency.}
To assess the computational efficiency of our method, we measured the training time and inference time.
Dynamic GNNs require computation time that scales with the length of the input data (details are provided in Appendix \ref{sec:complexity}). 
Figure \ref{fig: efficiency} (a) illustrates the average training time per step for dynamic GNNs with a batch size of 1 across various input lengths. In practice, while the RNN component operates very quickly, the GNN processing accounts for most of the computation time. 
Since our method performs GNN processing only once for each data sample, it is up to 17\(\times\) faster training time and more than 14\(\times\) faster inference time than DCRNN. 
Thus, our approach is not only superior in performance but also fast in computational efficiency.

\begin{figure}
  \centering
  \includegraphics[width=0.99\linewidth]{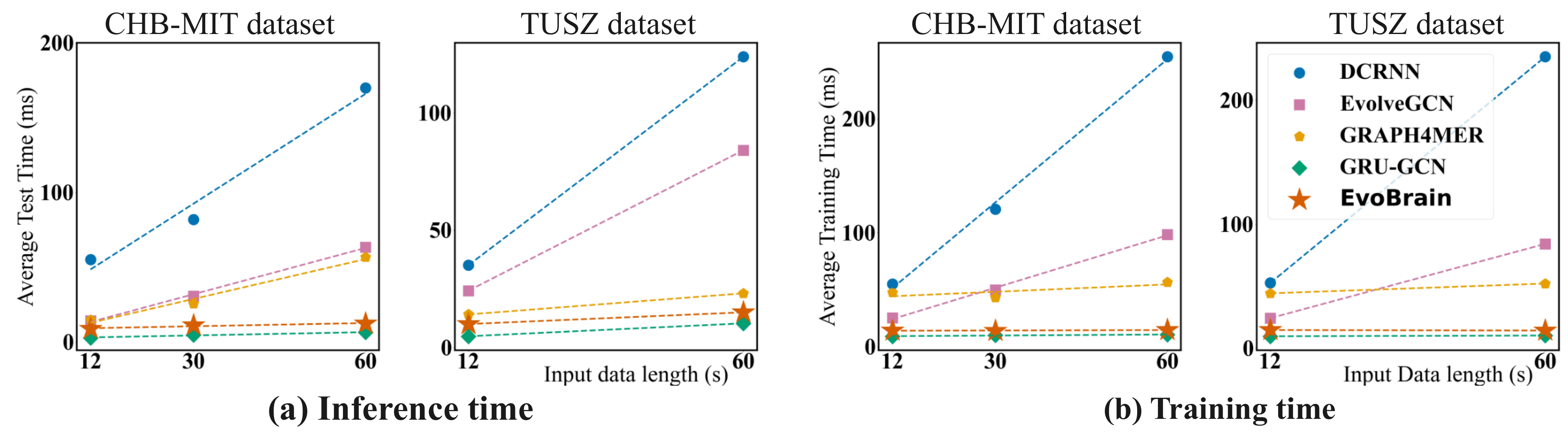}
  \caption{
  (a) Inference and (b) training time vs. input data length on CHB-MIT and TUSZ datasets. Our model achieves up to \textbf{17x faster} training times and \textbf{14x faster} inference times than its competitors, demonstrating scalability. 
  }
  \label{fig: efficiency}
    \vspace{-0.4em}
\end{figure}

\begin{figure}
  \centering
  \includegraphics[width=0.95\linewidth]{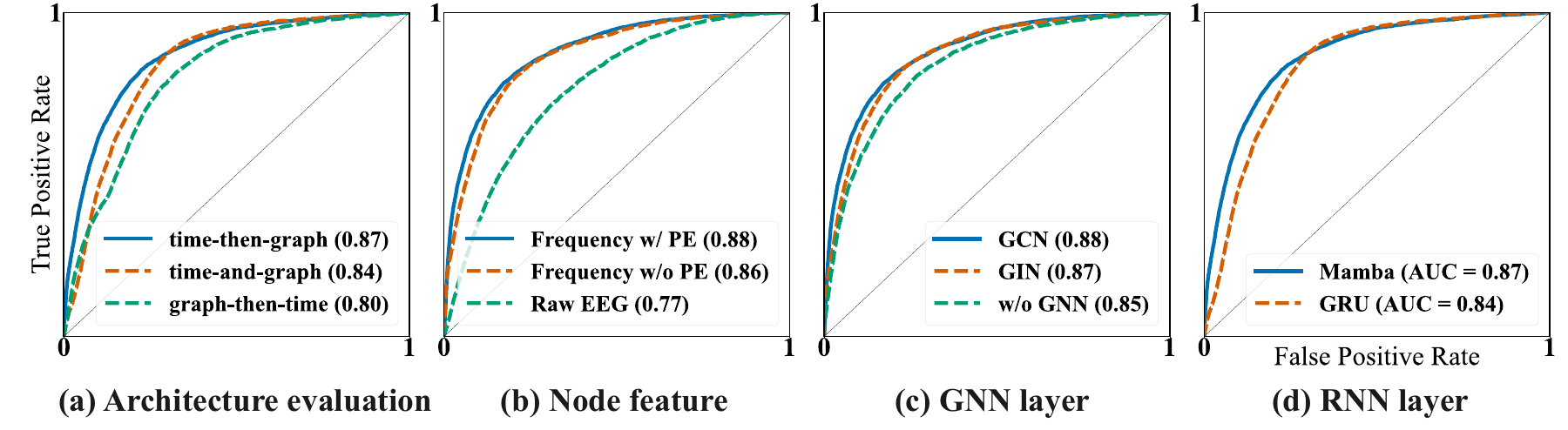}
  \caption{
   (a) Architecture evaluation using same GNN and RNN layers. (b) Results using raw EEG instead of frequency-domain features. (c) GNN and (d) RNN layer evaluation.
  }
  \label{fig: ablations}
\end{figure}

\begin{figure*}[t]
  \centering
  \includegraphics[width=1.0\linewidth]{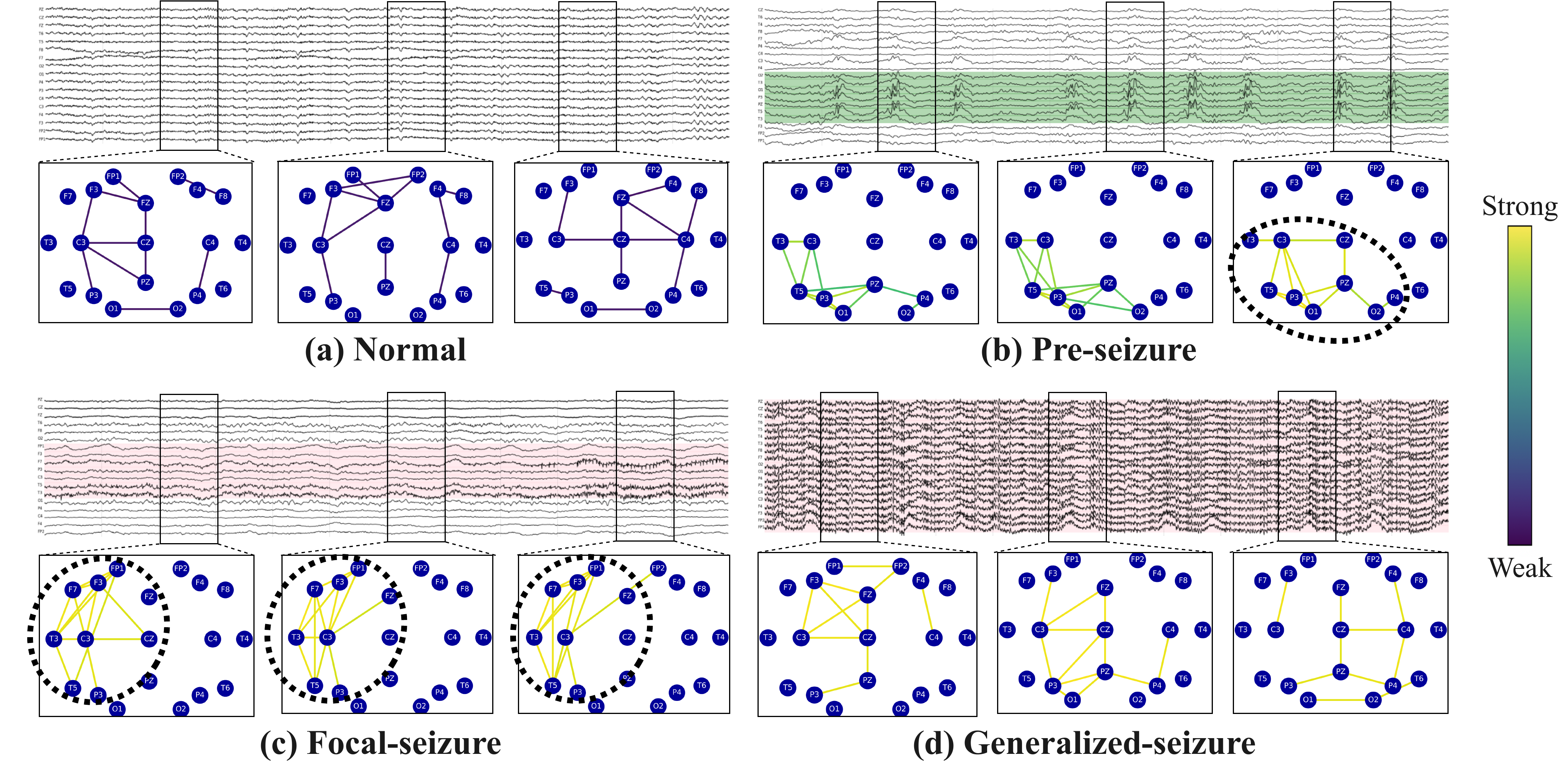}
  \caption{
  Learned graph structure visualizations. The yellow color of the edges indicates the strength of the connections. In (a) Normal state, the dark shows weak connections. In (b) Pre-seizure state, the connections in specific regions strengthen over time. In (c) Focal seizure, which occurs only in a specific area of the brain, strong connections are consistently present in a particular region. In (d) Generalized seizure, strong connections are observed across the entire brain.
  }
  \label{fig: clinical_analysis}
    \vspace{-10pt}
\end{figure*}

\paragraph{Ablation Study.}
\label{sec:ablation}
Figure \ref{fig: ablations} (a) shows the results of different architectures applied to the same GRU and GCN layers using the 12-second TUSZ dataset on the seizure detection task. Consistent with Theorem 1, the time-then-graph architecture achieved the best performance.
Figure \ref{fig: ablations} (b) shows the results when the FFT processing or LapPE was removed. 
Figure \ref{fig: ablations} (c) illustrates that replacing the GCN with GIN yields nearly identical performance, whereas removing the GNN and using only the RNN leads to a performance drop.
Figure \ref{fig: ablations} (d) shows results of replacing Mamba with GRU layers on the seizure detection task using the 60-second TUSZ dataset. 
The results demonstrate that Mamba provides performance benefits, especially for longer input sequences.

\subsection{Clinical Analysis.}
We show a case analysis of our constructed dynamic graphs from a neuroscience perspective. 
Figure \ref{fig: clinical_analysis} shows the top-10 edges with the strongest connections in the learned dynamic graphs , where the yellow color represents the strength of the connections. 
These edges are selected based on the highest values, indicating the most significant relationships captured by the model.
In Figure \ref{fig: clinical_analysis} (a), a sample unrelated to a seizure shows weak, sparse connections spread across various regions over an extended period. 
Figure \ref{fig: clinical_analysis} (b) shows a pre-seizure sample, 
where some connections gradually strengthen. 
Figure \ref{fig: clinical_analysis} (c) shows the result of a focal seizure, a type of seizure that originates in a specific area, with sustained strong connections only in a specific region.
In Figure \ref{fig: clinical_analysis} (d), a generalized seizure is illustrated, characterized by strong connections across the entire brain.\\
Successful surgical and neuromodulatory treatments critically depend on accurate localization of the seizure onset zone (SOZ) \citep{li2021neural}.
Even the most experienced clinicians are challenged because there is no clinically validated biomarker of SOZ.
Prior studies have shown that abnormal connections across several channels may constitute a more effective marker of the SOZ \citep{scharfman2007neurobiology, SOZ1, SOZ2}.
Our dynamic graph structures align with neuroscientific observations, successfully visualizing these abnormal connections and their changes. This offers promising potential for application in surgical planning and treatment strategies.

\section{Conclusion}
    \label{sec:conclusion}
    
 In this work, we propose a dynamic multi-channel EEG graph modeling approach, \method. 
 By adopting a \TTG strategy together with explicit dynamic modeling, \method captures the evolving nature of brain networks.
 Our theoretical analysis establishes the expressivity advantages of both explicit dynamic graphs and the \TTG paradigm over alternatives, providing a principled foundation for the design choices.
\method yields substantial empirical gains, including improvements of 23\% in AUROC and 30\% in F1 over a strong dynamic GNN baseline, and strong performance on early seizure prediction. Case clinical analyses further highlight potential clinical utility by revealing network changes consistent with seizure progression. Taken together, these results suggest \method is a practical and theoretically grounded step toward reliable, clinically meaningful EEG-based seizure analysis.
\paragraph{Limitations and Social Impacts.}
In seizure prediction, pre-ictal patterns are typically weaker and more spatially diffuse. While EvoBrain achieves the best performance among lightweight GNN-based models, LaBraM benefits from more parameters and pretraining on large-scale multiple EEG corpora, which enhances its ability to generalize under limited and noisy pre-seizure data conditions.
While we focus our evaluation on the seizure task, generalization to other tasks remains a challenge of future work.
Regarding potential negative societal impacts, we recognize key risks such as bias and system malfunction. Specifically, models trained on EEG data from specific demographic groups may exhibit biased performance when applied to broader populations, potentially leading to unequal diagnostic accuracy. Additionally, miscalibrated early seizure prediction could lead to false alarms, causing unnecessary interventions or patient distress. 
These challenges highlight promising directions for future work, where incorporating fairness assessments, demographic audits, and human-in-the-loop strategies can lead to more robust and trustworthy EEG-based systems.

\newpage
\section*{Acknowledgments}
The authors would like to sincerely thank the anonymous reviewers for their valuable comments and helpful suggestions.
This work was supported by
JST BOOST, Japan Grant Number JPMJBS2402, “Program for Leading Graduate Schools” of The University of Osaka, 
JSPS KAKENHI Grant-in-Aid for Scientific Research Number JP24K20778,    
JST CREST JPMJCR23M3, 
JST START JPMJST2553, 
JST CREST JPMJCR20C6,
the Japan Science and Technology Agency (JST) AIP Acceleration Research (JPMJCR24U2, TY), 
Japan Agency for Medical Research and Development (AMED, JP19dm0307103, HK, JP24wm0625207, TY), 
the Japan Science and Technology Agency (JST) Moonshot R \&D-MILLENNIA Program (JPMJMS2012, TY), 
AIP Acceleration Research (JPMJCR24U2, TY), 
JST K Program JPMJKP25Y6, 
JST COI-NEXT JPMJPF2009, 
JST COI-NEXT JPMJPF2115,
Future Social Value Co-Creation Project - The University of Osaka.

\bibliographystyle{ACM-Reference-Format}
\bibliography{BIB/main}

\appendix
    \section*{Appendix}
     \label{sec:appendix}
     \begin{figure*}[h]
  \centering
  \includegraphics[width=1.0\linewidth]{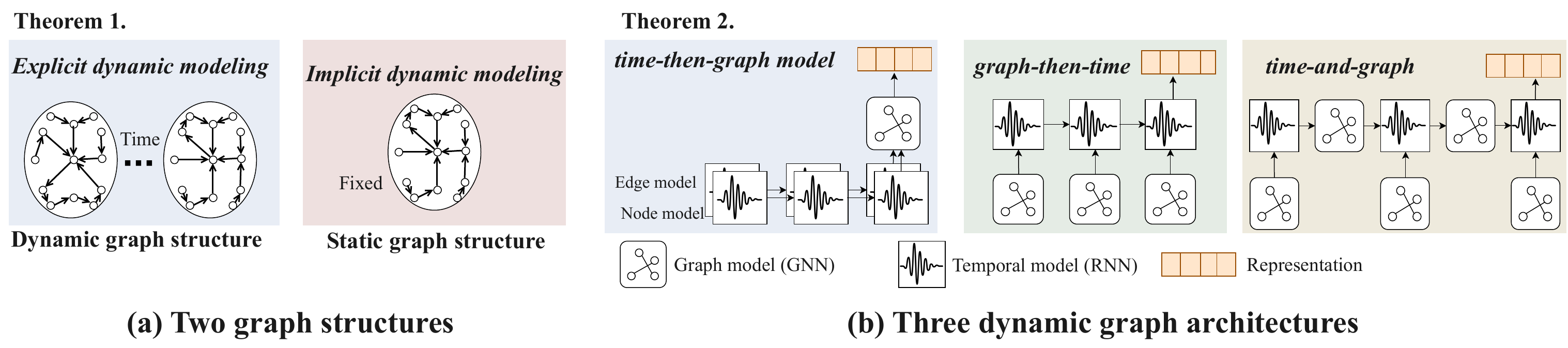}
  \caption{
  We compare (a) explicit dynamic modeling with implicit dynamic modeling and (b) time-then-graph with graph-then-time and graph-and-time architectures.
  }
  \label{fig: illustration}
    \vspace{-0.4em}
\end{figure*}

\section{Related Work}
    \label{sec:related}
    \textbf{Automated Seizure Analysis.}
The automated detection or prediction of seizures has been a long-standing challenge \citep{MetaLREC_AAAI2024, kotoge2024splitsee}. 
Deep learning has shown great achievements in automating EEG feature extraction and detection, using convolutional neural networks (CNNs) \citep{cnndetection1, CNN-LSTM, seizurenet, DenseCNN}, RNN-based models \citep{rnndetection1, CNN-LSTM, Rasheed}, Transformers \citep{TSTCC_ijcai21, BIOT_NEURIPS2023, LaBraM_iclr2024, MMM_NEURIPS2023}, and brain-inspired models \citep{SNN4, SNN3, SNN5, SNN2}.

\noindent\textbf{Spatial Relationships in EEG Networks.}
A seizure is fundamentally a network disease, and detection typically relies on the ability to determine abnormalities in EEG channels \citep{PNAS2014, NatNeuroscience2021, brain_dynamics}.
Many multi-channel methods have been proposed for capturing spatial information in channels \citep{LaBraM_iclr2024, MMM_NEURIPS2023, NEURIPS2023_Brant, KDD24_EEG2Rep}. 
Among them, recent studies have proposed GNNs to capture further the non-Euclidean structure of EEG electrodes and the connectivity in brain networks \citep{pmlr-seizure, ASTGCN, Cybernetics, BrainNet_KDD22, TNSRE22, EEG-GAT}.
These methods form EEGs as graphs, embedding each channel into the nodes and learning spatial graph representations \citep{EEG-GNN,GNN_AAAI23}. 
However, they do not explicitly model temporal relationships, relying instead on convolutional filters or conventional linear projections for node embeddings. 

\noindent
\textbf{Dynamic GNNs for EEG Modeling.}
Dynamic GNN is effective in learning temporal graph dynamics, achieving promising results in tasks such as dynamic link prediction \citep{iclr2024freedyg}, node classification \citep{KDD24_SEAN}, and graph clustering \citep{iclr2024_TGC}. 
Recently, some studies have focused on \DyGNN aimed to enhance temporal and graph representations for EEG-based seizure modeling. 
\citet{GNN_ICLR22} proposes a \timeandgraph model 
, which uses frequency features from FFT as node features and applies GNN and RNN processing simultaneously for each sliding window. \citet{MBrain_KDD23} adopts a \graphthentime model that combines GCN and RNN for seizure detection.
However, these studies construct static graphs with fixed structures across temporal learning. 
\citet{BiLSTM-GCNNet} propose \TTG model, BiLSTM-GCNNet, which uses RNNs to construct static graph from node feature.
GRAPHS4MER \citep{graphs4mer} is also proposed as a \TTG method. 
This work learns dynamic graphs using an internal graph structure learning model. 
Similarly, both \cite{asadzadeh2022accurate} and \cite{li2022graph} internally learn dynamic graph structures.
However, its input for graph structure learning is still based on the static Euclid distance or similarity of the entire data sample (e.g., 12 or 60 seconds) rather than individual snapshots. 
Our work differs by defining dynamic graph structures that more effectively capture the temporal evolution of brain connectivity in EEGs.

\section{Graph Isomorphism and 1-WL Test}\label{Tools}
\textbf{Graph isomorphism} refers to the problem of determining whether two graphs are structurally identical, meaning there exists a one-to-one correspondence between their nodes and edges. This is a crucial challenge in graph classification tasks, where the goal is to assign labels to entire graphs based on their structures. A model that can effectively differentiate non-isomorphic graphs is said to have high expressiveness, which is essential for accurate classification. In many cases, graph classification models like GNNs rely on graph isomorphism tests to ensure that structurally distinct graphs receive different embeddings, which improves the model's ability to classify graphs correctly. 

\noindent \textbf{1-Weisfeiler-Lehman (1-WL) test} is a widely used graph isomorphism test that forms the foundation of many GNNs. In the 1-WL framework, each node's representation is iteratively updated by aggregating information from its neighboring nodes, followed by a hashing process to capture the structural patterns of the graph. GNNs leveraging this concept, such as Graph Convolutional Networks (GCNs) and Graph Attention Networks (GATs), essentially perform a similar neighborhood aggregation, making them as expressive as the 1-WL test in distinguishing non-isomorphic graphs \citep{GIN_ICLR19}.  
Modern GNN architectures adhere to this paradigm, making the 1-WL a standard baseline for GNN expressivity.
In our work, we also use 1-WL-based GNNs, leveraging their proven expressiveness for dynamic brain graph modeling.

\section{Proofs of Expressivity Analysis}

\subsection{Expressiveness with Node and Edge Representations}
\label{proof_representaiton}

\begin{restatable}{lemma}{lemma:necessity_of_node_representations}[Necessity of Node Representations]
\label{lemma:necessity_of_node_representations}
\textbf{Edges alone} \citep{time-then-graph} \textbf{are insufficient} to uniquely distinguish certain temporal EEG graphs. Specifically, there exist pairs of temporal EEG graphs that have identical edge features across all time steps but different node features, making them indistinguishable based solely on edge representations. Therefore, \textbf{incorporating node representations is necessary} to achieve full expressiveness in EEG graph classification tasks.
\end{restatable}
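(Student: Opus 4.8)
The statement is essentially a separation result: I want to exhibit a concrete pair of temporal EEG graphs that are indistinguishable by any edge-only representation but distinguishable once node features are included. The plan is to proceed by explicit construction rather than by an abstract counting argument, since the claim asserts existence of a separating instance.

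First I would fix a small number of channels (nodes) $N$ and a short time horizon $T$ — $N=2$, $T=1$ should suffice for a minimal example, though I may take $N$ slightly larger if the graph-isomorphism bookkeeping is cleaner. I construct two temporal EEG graphs $\mathcal{G}^{(1)}=(\mathcal{A}^{(1)},\mathcal{X}^{(1)})$ and $\mathcal{G}^{(2)}=(\mathcal{A}^{(2)},\mathcal{X}^{(2)})$ that share \emph{identical} adjacency tensors at every time step, $\mathcal{A}^{(1)}_{:,:,t}=\mathcal{A}^{(2)}_{:,:,t}$ for all $t$, and hence identical edge embeddings $h^{edge}_{i,j,t}$ regardless of which $\text{RNN}^{edge}$ is applied, but whose node feature sequences $\mathcal{X}^{(1)}\neq\mathcal{X}^{(2)}$ differ. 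A natural choice is to keep the same edge set (say a single edge between two channels with fixed weight) while assigning the two channels distinct electrical-activity feature vectors $x_{i,t}$ in one graph and swapped or altered ones in the other, so that the multiset of node features genuinely differs and cannot be recovered from edges alone.

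Next I would argue the two directions. (i) \emph{Indistinguishability by edges:} any representation built purely from $\{\bm{a}_{i,j,\le T}\}$ — in particular the $\text{RNN}^{edge}$ outputs feeding the final GNN in a structure-only pipeline — receives identical inputs on $\mathcal{G}^{(1)}$ and $\mathcal{G}^{(2)}$, so it produces identical outputs; this is immediate from the equality of the adjacency tensors. (ii) \emph{Distinguishability with nodes:} a model that also encodes $\text{RNN}^{node}(\mathbf{X}_{i,\le T})$ and passes the resulting node attributes into the $\text{GNN}^L$ (as in the \timethengraph Definition~\ref{def:time-then-graph}) produces different node-level representations, because even a $1$-WL GNN distinguishes graphs whose initial node-feature multisets differ. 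I would make sure the chosen features are not merely a relabeling that a $1$-WL test would collapse — i.e., the two graphs must be genuinely non-isomorphic as attributed graphs, so the separating feature difference survives the color-refinement / hashing step.

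The main obstacle I anticipate is the subtlety in part (ii): I must verify that the node-feature difference is not washed out by the graph automorphism / $1$-WL refinement, since nodes that are structurally equivalent under the (identical) edge structure could in principle receive symmetric treatment. Concretely, I need the initial node colorings induced by $\mathcal{X}^{(1)}$ and $\mathcal{X}^{(2)}$ to yield distinct $1$-WL color histograms after aggregation. I would guard against this by choosing features whose induced multiset of colors already differs at initialization (so the distinction holds before any message passing), which reduces the claim to the elementary fact that injective readout over distinct node-feature multisets yields distinct graph embeddings. This pins down exactly why edges are insufficient while the node-aware \timethengraph model attains the required expressiveness, and it dovetails with the separation results of Lemmas~\ref{gttgat} and~\ref{tagttg} used to establish Theorem~\ref{tgtotal}.
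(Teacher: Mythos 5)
Your construction is essentially the paper's own proof: it likewise takes two temporal EEG graphs with identical edge features $a^{(1)}_{i,j,t}=a^{(2)}_{i,j,t}$ for all $(v_i,v_j)\in\mathcal{E}$ and all $t$, but with $\bm{x}^{(1)}_{k,t'}\neq\bm{x}^{(2)}_{k,t'}$ at some node and time step, and concludes that any edge-only representation receives identical inputs on both graphs and hence outputs identical embeddings. Your part~(ii) — verifying that the node-feature difference survives 1-WL refinement by insisting the initial node-feature \emph{multisets} differ (rightly ruling out your own earlier ``swapped features'' variant, which under an edge automorphism would yield isomorphic attributed graphs) — is a sound and worthwhile precaution that the paper does not include in this lemma but instead factors out into Lemma~\ref{lemma:expressiveness_with_node_and_edge}.
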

\begin{proof}
Given \(\mathcal{G}^{(1)}\) and \(\mathcal{G}^{(2)}\), with the same sets of \(\mathcal{V}\) and \(\mathcal{E}\), 
for $\forall$ \(t\), the edge features satisfy:
$
a^{(1)}_{i,j,t} = a^{(2)}_{i,j,t} \quad \forall (v_i, v_j) \in \mathcal{E}, \forall t \in \{1, 2, \dots, T\}.
$
However, suppose there exists at least one node \(v_k \in \mathcal{V}\) and one time step \(t'\) such that:
$
\bm{x}^{(1)}_{k,t'} \neq \bm{x}^{(2)}_{k,t'}.
$
Since $a^{(1)}_{i,j,t} = a^{(2)}_{i,j,t}$, any GNN architecture that relies solely on edge features will produce identical embeddings for \(\mathcal{G}^{(1)}\) and \(\mathcal{G}^{(2)}\), $\forall t$. 
\end{proof}

\begin{restatable}{lemma}{expressiveness}[Expressiveness with Node and Edge Representations]
\label{lemma:expressiveness_with_node_and_edge}
When both node and edge representations are incorporated, a GNN can uniquely distinguish any pair of temporal EEG graphs that differ in either node features or edge features at any time step, provided the GNN is sufficiently expressive (e.g., 1-WL GNN). 
\end{restatable}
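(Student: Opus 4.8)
The plan is to show that the \timethengraph pipeline of Definition~\ref{def:time-then-graph} never collapses two temporal EEG graphs that already differ in some node or edge attribute, by tracking that difference through its two stages: the temporal RNN encoders and the final static 1-WL GNN. Concretely, I would fix two temporal graphs $\mathcal{G}^{(1)}, \mathcal{G}^{(2)}$ on a common $(\mathcal{V}, \mathcal{E})$ and assume they disagree at some time step $t'$, either $\bm{x}^{(1)}_{k,t'} \neq \bm{x}^{(2)}_{k,t'}$ for some node $v_k$, or $a^{(1)}_{i,j,t'} \neq a^{(2)}_{i,j,t'}$ for some edge $(v_i,v_j)$. The goal is to produce $\mathbf{Z}^{(1)} \neq \mathbf{Z}^{(2)}$.

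First I would handle the temporal stage. Since $\text{RNN}^{node}$ and $\text{RNN}^{edge}$ can be instantiated as injective sequence encoders (appealing to the universal sequence-approximation argument used for the \TTG family in \citet{time-then-graph}), a disagreement in an input sequence forces a disagreement in the corresponding output embedding: in the node case $\bm{X}^{(1)}_{k,\leq T} \neq \bm{X}^{(2)}_{k,\leq T}$ yields $\bm{h}^{node,(1)}_{k} \neq \bm{h}^{node,(2)}_{k}$, and in the edge case $\bm{a}^{(1)}_{i,j,\leq T} \neq \bm{a}^{(2)}_{i,j,\leq T}$ yields $\bm{h}^{edge,(1)}_{i,j} \neq \bm{h}^{edge,(2)}_{i,j}$. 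Hence the two resulting static graphs $(\mathbf{H}^{node,(1)}, \mathbf{H}^{edge,(1)})$ and $(\mathbf{H}^{node,(2)}, \mathbf{H}^{edge,(2)})$ differ in at least one node label or one edge label.

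Next I would invoke the expressiveness of the static 1-WL GNN on these attributed graphs. Because the two static graphs share the same underlying topology and differ only in attributes, the difficulty of 1-WL on certain non-isomorphic pairs does not arise: the color refinement that a 1-WL GNN implements reads node attributes into its initial coloring and edge attributes into each aggregation step. In the node-difference case the initial coloring multisets already differ; in the edge-difference case the refined colors of the endpoints $v_i, v_j$ diverge after one aggregation round, since the GNN aggregates over edge features. Composed with an injective (GIN-style) readout \citep{GIN_ICLR19}, the differing multiset of final node colors yields $\mathbf{Z}^{(1)} \neq \mathbf{Z}^{(2)}$, completing the argument. Together with Lemma~\ref{lemma:necessity_of_node_representations}, this shows that node and edge representations are jointly necessary and sufficient for full expressiveness.

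The hard part will be making the two injectivity claims rigorous and composable: I must argue that the RNN encoders can be chosen injective on the relevant finite-length, bounded sequence space \emph{simultaneously} for nodes and edges, and that the subsequent 1-WL GNN together with its pooling readout does not reintroduce a collision. The delicate subcase is the edge-only difference, where the perturbed attribute lives on an edge and must be shown to propagate into node colorings before pooling; here I would rely on the GNN explicitly incorporating $\mathbf{H}^{edge}$ in its message function (as in Equation~\eqref{eqn:timethengraph}), so that the altered edge weight changes the messages received by its endpoints and thereby survives into $\mathbf{Z}$.
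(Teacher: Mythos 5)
Your proposal is correct and follows essentially the same route as the paper's proof: a case split on whether the graphs differ in a node feature or an edge feature, the claim that such a difference survives into the corresponding node or edge embedding, and the observation that the GNN's aggregation over both node and edge embeddings propagates the difference into distinct final representations $\mathbf{Z}^{(1)} \neq \mathbf{Z}^{(2)}$. The only distinction is one of rigor, in your favor: the paper simply asserts that embeddings ``will differ'' and that differences ``propagate through the network,'' whereas you make explicit the injectivity requirements (injective sequence encoders for the temporal stage, GIN-style injective aggregation and readout for the static stage) and correctly flag the edge-only subcase where the difference must enter the endpoints' colorings via the message function before pooling.
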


\begin{proof}
Given \(\mathcal{G}^{(1)} = (\mathcal{A}^{(1)}, \mathcal{X}^{(1)})\) and \(\mathcal{G}^{(2)} = (\mathcal{A}^{(2)}, \mathcal{X}^{(2)})\), suppose they differ in at least one node feature or edge feature at some time step \(t\). 
An expressive GNN can produce different embeddings for these graphs by capturing the differences in node and/or edge features. Specifically:

\begin{enumerate}
    \item If \(\mathbf{X}^{(1)}_{:, t} \neq \mathbf{X}^{(2)}_{:, t}\) for some \(t\), then the node embeddings \(h^{(1)}_{i,t}\) and \(h^{(2)}_{i,t}\) will differ for at least one node \(v_i\).
    \item If \(\bf{a}^{(1)}_{i,j,t} \neq \bf{a}^{(2)}_{i,j,t}\) for some edge \((v_i, v_j)\) and some \(t\), then the edge embeddings \(h^{(1)}_{ij,t}\) and \(h^{(2)}_{ij,t}\) will differ for that edge.
\end{enumerate}
Since the GNN aggregates information from both node and edge embeddings, any difference in either will propagate through the network, resulting in distinct final representations \(\mathbf{Z}^{(1)}\) and \(\mathbf{Z}^{(2)}\). Thus, the GNN can uniquely distinguish between \(\mathcal{G}^{(1)}\) and \(\mathcal{G}^{(2)}\).
\end{proof}

\subsection{Implicit and Explicit Dynamic Modeling}
\label{proof_implicit_explicit}
\implicitexplicit*

\begin{proof}
Let $\mathcal{F}_{\mathrm{implicit}}$ be the family of functions (e.g., 
representations or classifiers) expressible by an \emph{implicit} dynamic graph model 
that compresses all adjacency snapshots 
$\{\mathbf{A}_{:,:,t}\}_{t=1}^{T}$ into a single adjacency matrix 
$\widehat{\mathbf{A}}$, and let $\mathcal{F}_{\mathrm{explicit}}$ be the family of functions 
expressible by an \emph{explicit} dynamic graph model that uses the full 
time-varying adjacency $\{\mathbf{A}_{:,:,t}\}_{t=1}^{T}$.
We prove
\[
  \mathcal{F}_{\mathrm{implicit}} \;\subseteq\; 
  \mathcal{F}_{\mathrm{explicit}}
  \quad \text{and} \quad
  \mathcal{F}_{\mathrm{implicit}} \;\neq\; 
  \mathcal{F}_{\mathrm{explicit}},
\]
i.e.,
\[
  \mathcal{F}_{\mathrm{implicit}} \;\subset; 
  \mathcal{F}_{\mathrm{explicit}}.
\]
\textbf{(1) Subset Relationship.}\ 
It can be interpreted as any implicit dynamic model aggregates the set of adjacency matrices 
$\{\mathbf{A}_{:,:,t}\}_{t=1}^{T}$ into a single, static $\widehat{\mathbf{A}}$ 
by some function $f(\cdot)$ (e.g., averaging, summation, thresholding):
\[
  \widehat{\mathbf{A}} \;=\; 
  f\!\bigl(\{\mathbf{A}_{:,:,t}\}_{t=1}^{T}\bigr).
\]
Such a model uses $\widehat{\mathbf{A}}$ for all time steps, disregarding 
changes in edges across $t$. In contrast, an explicit dynamic model 
directly processes the full sequence 
$\{\mathbf{A}_{:,:,t}\}_{t=1}^{T}$. 
To see that $\mathcal{F}_{\mathrm{implicit}}$ is contained in 
$\mathcal{F}_{\mathrm{explicit}}$, observe that any function 
$f_{\mathrm{imp}} \in \mathcal{F}_{\mathrm{implicit}}$ 
can be mimicked by an explicit model that simply 
\emph{ignores} the time-specific variability in adjacency and 
substitutes $f(\{\mathbf{A}_{:,:,t}\}) = \widehat{\mathbf{A}}$ 
as if it were the adjacency for every $t$. 
Hence,
\[
  \forall\,f_{\mathrm{imp}}\in\mathcal{F}_{\mathrm{implicit}},\quad
  f_{\mathrm{imp}}\in\mathcal{F}_{\mathrm{explicit}},
\]
implying
\[
  \mathcal{F}_{\mathrm{implicit}}
  \;\subseteq\;
  \mathcal{F}_{\mathrm{explicit}}.
\]

\textbf{(2) Strict Separation by Counterexample.}\ 
To show that $\mathcal{F}_{\mathrm{implicit}} \neq \mathcal{F}_{\mathrm{explicit}}$, 
we construct two temporal EEG graphs, 
$\mathcal{G}^{(1)}=\{\mathbf{X}_{:,t}, \mathbf{A}_{:,:,t}^{(1)}\}_{t=1}^{T}$ 
and 
$\mathcal{G}^{(2)}=\{\mathbf{X}_{:,t}, \mathbf{A}_{:,:,t}^{(2)}\}_{t=1}^{T}$, 
such that
\begin{enumerate}
    \item Their node features match at every time step:
    \[
      \mathbf{X}_{:,t}^{(1)} \;=\; \mathbf{X}_{:,t}^{(2)} 
      \quad \forall\,t\in\{1,\dots,T\}.
    \]
    \item Their adjacency differs \emph{only} at one time step $t'$:
    \[
      \mathbf{A}_{:,:,t'}^{(1)} 
      \;\neq\; 
      \mathbf{A}_{:,:,t'}^{(2)},
      \quad
      \text{and}
      \quad
      \mathbf{A}_{:,:,t}^{(1)} \;=\; \mathbf{A}_{:,:,t}^{(2)}
      \quad \forall\,t\neq t'.
    \]
\end{enumerate}
Under \emph{implicit} dynamic modeling, the function 
$f(\{\mathbf{A}_{:,:,t}\}_{t=1}^T)$ may yield the same compressed adjacency 
$\widehat{\mathbf{A}}$ for both $\mathcal{G}^{(1)}$ and $\mathcal{G}^{(2)}$. 
Hence, any implicit model would treat the two graphs \emph{identically}, 
failing to separate them because the single $\widehat{\mathbf{A}}$ 
cannot preserve the difference at $t'$.

In contrast, the \emph{explicit} dynamic model processes 
$\{\mathbf{A}_{:,:,t}^{(k)}\}_{t=1}^T$ without discarding the 
time-step-specific variations. Consequently, it \emph{does} see 
$\mathbf{A}_{:,:,t'}^{(1)} \neq \mathbf{A}_{:,:,t'}^{(2)}$, 
and can therefore distinguish $\mathcal{G}^{(1)}$ from $\mathcal{G}^{(2)}$. 
Thus, there exist inputs in the domain of temporal EEG graphs 
that are \emph{indistinguishable} by any implicit dynamic model 
but \emph{distinguishable} by an explicit dynamic model. 
This proves
\[
  \mathcal{F}_{\mathrm{implicit}} \;\neq\; 
  \mathcal{F}_{\mathrm{explicit}},
\]
and in view of the subset argument, we conclude 
\[
  \mathcal{F}_{\mathrm{implicit}} \;\subset\; 
  \mathcal{F}_{\mathrm{explicit}},
\]
explicit dynamic modeling is strictly more expressive than implicit dynamic modeling.
\end{proof}

\subsection{\TAG and \TTG}
\label{proof_ttg}
\tagttg*
\citet{time-then-graph} prove that a \TTG representation that outputs the same
embeddings as an arbitrary \TAG representation.
Thus, 
\TTG is as expressive as
\TAG.
To prove Lemma \ref{tagttg} we also provide an EEG graph classification task where any \TAG
representation will fail while a \TTG would work, which then, added
to the previous result, proves that \TTG is strictly more expressive
than \TAG.

\begin{figure}[tb]
\begin{center}
\centerline{\includegraphics[width=0.8\columnwidth]{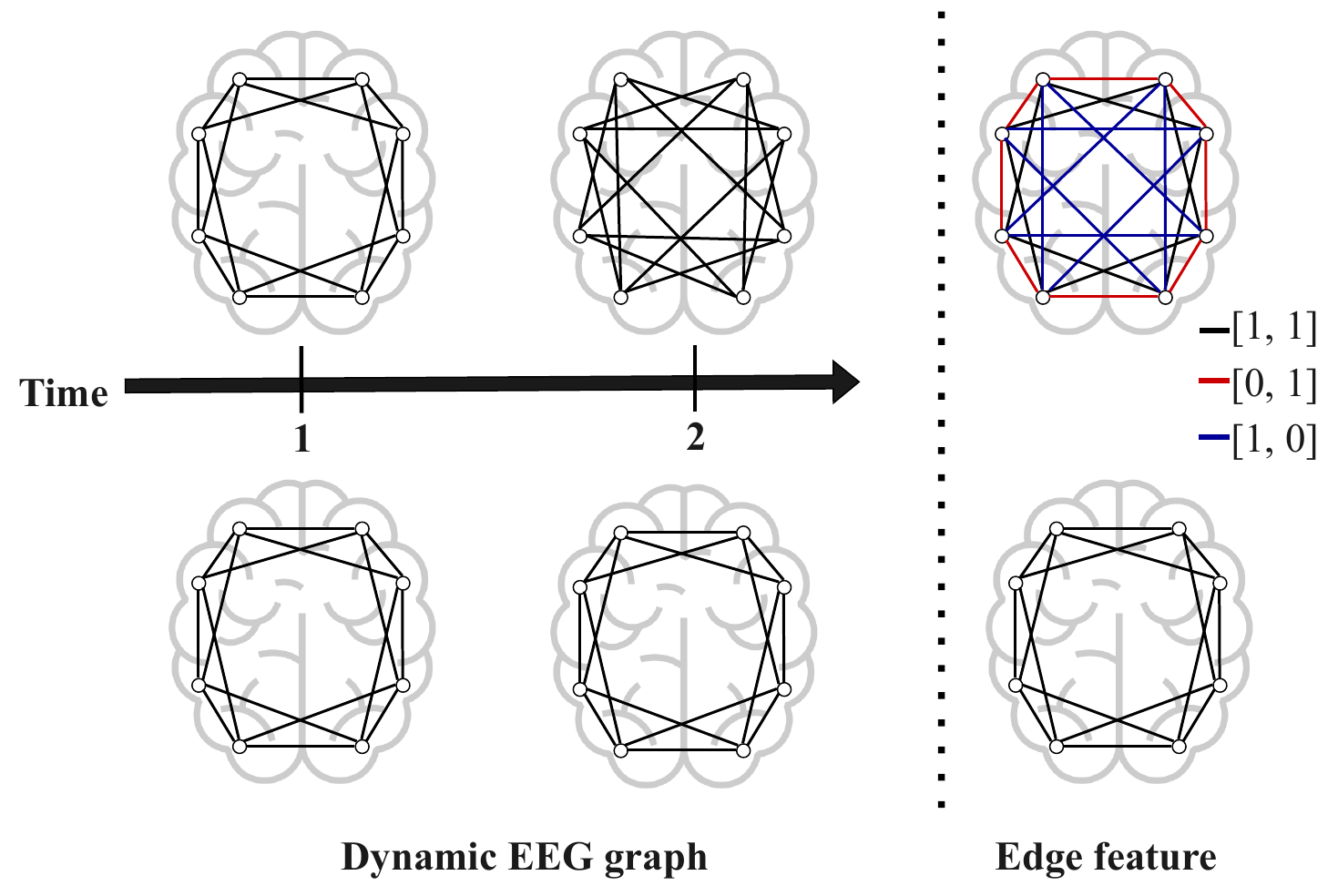}}
\caption{
    {
        \bf A synthetic EEG task where only \TTG is expressive.
    }
    The top and bottom 2-time temporal graphs on the left side have snapshots of
    different structure at time $t_{2}$ (denote by $\mathcal{C}_{8, 2}$ and $
        \mathcal{C}_{8, 1}
    $).
    The top and bottom temporal graphs on the left show different dynamic-graph structures. 
The right side shows their aggregated versions, where edge attributes indicate whether they existed (1) or not (0) over time, using different colors. 
The goal is to distinguish the structural differences between the top and bottom graphs.
\TAG has the same node representation neighbors in both snapshots, indistinguishable. \TTG aggregates the dynamic graphs into different node representations and succeeds in distinguishing them.
}
\label{fig:dyncsl}
\end{center}
\vskip -0.3in
\end{figure}

\begin{proof}
We now propose a synthetic EEG task, 
whose temporal graph is illustrated in Figure \ref{fig:dyncsl}.
The goal is to differentiate the topologies between two 2-step temporal graphs.
Each snapshot is a static EEG graph
with 8 attributed nodes, denoted as $\mathcal{C}_{8, 1}$ and $ \mathcal{C}_{8, 2}$.

Two temporal graphs differ in their second time step $
    t_{2}
$.
If the graphs have the same features, any 1-WL
GNN will output the same representations for both $\mathcal{C}_{8, 1}$ and $\mathcal{C}_{8, 2}$.
We use $\bf{a}^{(\text{top})}$ to represent the adjacency matrix of dynamics in
the top left of Figure \ref{fig:dyncsl}, and $\bf{a}^{(\text{btm})}$ for dynamics in
the bottom left of Figure \ref{fig:dyncsl}.
Note that $\mathbf{X}^{(\text{top})} = \mathbf{X}^{(\text{btm})}$ since the temporal
graph has the same features.

Hence, for a \TAG representation:
\begin{align*}
\text{GNN}_\text{in}^{L}\big(
    \mathbf{X}^{(\text{top})}_{:, 1}, \mathbf{A}^{(\text{top})}_{:, :, 1}
\big) &= \text{GNN}_\text{in}^{L}\big(
    \mathbf{X}^{(\text{top})}_{:, 2}, \mathbf{A}^{(\text{top})}_{:, :, 2}
\big)   
\\= \text{GNN}_\text{in}^{L}\big(
    \mathbf{X}^{(\text{btm})}_{:, 1}, \mathbf{A}^{(\text{btm})}_{:, :, 1}
\big) 
&= \text{GNN}_\text{in}^{L}\big(
    \mathbf{X}^{(\text{btm})}_{:, 2}, \mathbf{A}^{(\text{btm})}_{:, :, 2}
\big),
\\
\text{GNN}_\text{rc}^{L}\big(\mathbf{H}^{(\text{top})}_{:, 0}, \mathbf{A}^{(\text{top})}_{:, :, 1}\big) &= 
\text{GNN}_\text{rc}^{L}\big(\mathbf{H}^{(\text{btm})}_{:, 0}, \mathbf{A}^{(\text{btm})}_{:, :, 1}
\big),  \\
\text{GNN}_\text{rc}^{L}\big( \mathbf{H}^{(\text{top})}_{:, 1}, \mathbf{A}^{(\text{top})}_{:, :, 2} \big) 
&=\text{GNN}_\text{rc}^{L}\big( \mathbf{H}^{(\text{btm})}_{:, 1}, \mathbf{A}^{(\text{btm})}_{:, :, 2} \big).
\end{align*}
Then, when we apply Equation (\ref{eqn:timeandgraph}) at the first time step, 
for the top graph:
\begin{align*}
\bf{h}^{(\text{top})}_{i, 1} &= \text{Cell}\bigg(
    \Big[\text{GNN}_\text{in}^{L}\big(\mathbf{X}^{(\text{top})}_{:, 1}, \mathbf{A}^{(\text{top})}_{:, :, 1}\big)\Big]_{i},
    \Big[\text{GNN}_\text{rc}^{L}\big(\bf{h}^{(\text{top})}_{:, 0}, \mathbf{A}^{(\text{top})}_{:, :, 1}\big)\Big]_{i}
\bigg),
\end{align*}
for the bottom graph:
\begin{align*}
\bf{h}^{(\text{btm})}_{i, 1} &= \text{Cell}\bigg(
    \Big[\text{GNN}_\text{in}^{L}\big(\mathbf{X}^{(\text{btm})}_{:, 1}, \mathbf{A}^{(\text{btm})}_{:, :, 1}\big)\Big]_{i},
    \Big[\text{GNN}_\text{rc}^{L}\big(\bf{h}^{(\text{btm})}_{:, 0}, \mathbf{A}^{(\text{btm})}_{:, :, 1}\big)\Big]_{i}
\bigg).
\end{align*}
Since $\mathbf{X}^{(\text{top})} = \mathbf{X}^{(\text{btm})}$, $\mathbf{A}^{(\text{top})}_{:, :, 1} = \mathbf{A}^{(\text{btm})}_{:, :, 1}$, and $\bf{h}^{(\text{top})}_{:, 0} = \bf{h}^{(\text{btm})}_{:, 0}$, \\
we have:
$
\bf{h}^{(\text{top})}_{i, 1} = \bf{h}^{(\text{btm})}_{i, 1}.
$

\noindent For the second time step:
\begin{align*}
\bf{h}^{(\text{top})}_{i, 2} &= \text{Cell}\bigg(
    \Big[\text{GNN}_\text{in}^{L}\big(\mathbf{X}^{(\text{top})}_{:, 2}, \mathbf{A}^{(\text{top})}_{:, :, 2}\big)\Big]_{i},
    \Big[\text{GNN}_\text{rc}^{L}\big(\bf{h}^{(\text{top})}_{:, 1}, \mathbf{A}^{(\text{top})}_{:, :, 2}\big)\Big]_{i}
\bigg), \\
\bf{h}^{(\text{btm})}_{i, 2} &= \text{Cell}\bigg(
    \Big[\text{GNN}_\text{in}^{L}\big(\mathbf{X}^{(\text{btm})}_{:, 2}, \mathbf{A}^{(\text{btm})}_{:, :, 2}\big)\Big]_{i},
    \Big[\text{GNN}_\text{rc}^{L}\big(\bf{h}^{(\text{btm})}_{:, 1}, \mathbf{A}^{(\text{btm})}_{:, :, 2}\big)\Big]_{i}
\bigg).
\end{align*}
Despite $\mathbf{A}^{(\text{top})}_{:, :, 2} \neq \mathbf{A}^{(\text{btm})}_{:, :, 2}$, the 1-WL GNN will output the same representations $\bf{h}^{(\text{top})}_{i, 2} = \bf{h}^{(\text{btm})}_{i, 2}$ for both $\mathcal{C}_{8, 1}$ and $\mathcal{C}_{8, 2}$. 

\noindent Therefore:
\begin{align*}
\mathbf{Z}^{(\text{top})} = \bf{h}^{(\text{top})}_{i, 2} = \bf{h}^{(\text{btm})}_{i, 2} = \mathbf{Z}^{(\text{btm})}.
\end{align*}
\textbf{Thus, time-and-graph will output the same final representation $\mathbf{Z}^{(\text{top})} =  \mathbf{Z}^{(\text{btm})}$ for two different temporal graphs in Figure \ref{fig:dyncsl}.}

For the time-then-graph representation, we apply Equation (\ref{eqn:timethengraph}).
First, for the node representations:

\begin{align*}
\bf{h}^\text{node(top)}_i &= \text{RNN}^\text{node}\big( \mathbf{X}^{(\text{top})}_{i, \leq 2} \big) 
= \text{RNN}^\text{node}\big( [\bf{x}^{(\text{top})}_{i, 1}, \bf{x}^{(\text{top})}_{i, 2}] \big), \\
\bf{h}^\text{node(btm)}_i &= \text{RNN}^\text{node}\big( \mathbf{X}^{(\text{btm})}_{i, \leq 2} \big) 
= \text{RNN}^\text{node}\big( [\bf{x}^{(\text{btm})}_{i, 1}, \bf{x}^{(\text{btm})}_{i, 2}] \big).
\end{align*}
Since $\mathbf{X}^{(\text{top})} = \mathbf{X}^{(\text{btm})}$, we have $\bf{h}^\text{node(top)}_i = \bf{h}^\text{node(btm)}_i$ for all nodes $i$.

\noindent Now, for the edge representations:

\begin{align*}
\bf{h}^\text{edge(top)}_{i,j} &= \text{RNN}^\text{edge}\big( \mathbf{A}^{(\text{top})}_{i,j,\leq 2} \big) 
= \text{RNN}^\text{edge}\big( [\bf{a}^{(\text{top})}_{i,j,1}, \bf{a}^{(\text{top})}_{i,j,2}] \big), \\
\bf{h}^\text{edge(btm)}_{i,j} &= \text{RNN}^\text{edge}\big( \mathbf{A}^{(\text{btm})}_{i,j,\leq 2} \big) = \text{RNN}^\text{edge}\big( [\bf{a}^{(\text{btm})}_{i,j,1}, \bf{a}^{(\text{btm})}_{i,j,2}] \big).
\end{align*}
Here, $\bf{a}^{(\text{top})}_{i,j,\leq 2} \neq \bf{a}^{(\text{btm})}_{i,j,\leq 2}$ for some $(i,j)$ pairs, because the graph structures differ at $t_2$. Therefore, $\bf{h}^\text{edge(top)}_{i,j} \neq \bf{h}^\text{edge(btm)}_{i,j}$ for these pairs.
Finally, we apply the GNN:

\begin{align*}
\mathbf{Z}^{(\text{top})} &= \text{GNN}^L\big( \mathbf{H}^\text{node(top)}, \mathbf{H}^\text{edge(top)} \big), \\
\mathbf{Z}^{(\text{btm})} &= \text{GNN}^L\big( \mathbf{H}^\text{node(btm)}, \mathbf{H}^\text{edge(btm)} \big).
\end{align*}
Since $\bf{h}^\text{edge(top)} \neq \bf{h}^\text{edge(btm)}$, and 1-WL GNNs can distinguish graphs with different edge attributes, we have:

\begin{equation}
\mathbf{Z}^{(\text{top})} \neq \mathbf{Z}^{(\text{btm})}.
\end{equation}
\textbf{Thus, time-then-graph outputs different final representations $\mathbf{Z}^{(\text{top})} \neq \mathbf{Z}^{(\text{btm})}$ for the two temporal graphs in Figure \ref{fig:dyncsl}, successfully distinguishing them.}

Finally, we conclude:
\begin{enumerate}
\item
The \TTG is at least as expressive as the \TAG;
\item
The \TTG can distinguish temporal graphs not distinguishable by
\TAG.
\end{enumerate}
Thus, \TTG is strictly more expressive than \TAG.
More precisely,
\begin{align*}
\text{{\TAG}} \exprlt_{\mathbb{T}_{n, T, \theta}} \text{{\TTG}},
\end{align*}
concluding our proof.

\end{proof}

\section{Computational Complexity}
\label{sec:complexity}

In this section, we provide the computational complexities of the three architectures: \emph{graph-then-time}, \emph{time-and-graph}, and \emph{time-then-graph}. We demonstrate that the \emph{time-then-graph} architecture has the lowest computational complexity among them.

Let \( T \) be the number of time steps, \( V \) be the number of nodes, \( E_t \) be the number of edges at time \( t \), \( \sum_{t} E_t \) be the total number of edges across all time steps, \( E_{\text{agg}} \) be the number of edges in the aggregated graph (i.e., the union of all edges across time steps), and \( d \) be the dimension of the node and edge representations.

\subsection{\emph{graph-then-time} Architecture}

In the \emph{graph-then-time} architecture, at each time step \( t \), a GNN is applied to the snapshot graph \( (\mathbf{X}_{:, t}, \bf{a}_{:, :, t}) \) to capture spatial relationships. Subsequently, an RNN processes the node embeddings over time to capture temporal dependencies.

The computational complexity per time step \( t \) is dominated by:

\[
\mathcal{O}\left( V d^{2} + E_t d \right),
\]

where \( V d^{2} \) accounts for node-wise transformations (e.g., linear layers), and \( E_t d \) accounts for message passing over edges.

Over all time steps, the total complexity for the GNN computations is:

\[
\mathcal{O}\left( T V d^{2} + \sum_{t=1}^{T} E_t d \right).
\]

The RNN processes the node embeddings over time with complexity:

\[
\mathcal{O}\left( V T d^{2} \right).
\]

Therefore, the overall computational complexity of the \emph{graph-then-time} architecture is:

\begin{equation}
\label{eq:complexity_graph_then_time}
\mathcal{O}\left( V T d^{2} + \sum_{t=1}^{T} E_t d \right).
\end{equation}

\subsection{\emph{time-and-graph} Architecture}

In the \emph{time-and-graph} architecture, temporal dependencies are integrated into the GNN computations. At each time step \( t \), two GNNs are applied:

\begin{itemize}
    \item \(\text{GNN}_{\text{in}}^{L}\) processes the current snapshot inputs \( (\mathbf{X}_{:, t}, \bf{a}_{:, :, t}) \).
    \item \(\text{GNN}_{\text{rc}}^{L}\) processes the representations from the previous time step \( (\bf{h}_{:, t-1}, \bf{a}_{:, :, t}) \).
\end{itemize}

The computational complexity per time step \( t \) is:

\[
\mathcal{O}\left( V d^{2} + E_t d^{2} \right),
\]

due to the node-wise transformations and edge-wise message passing with updated representations.

Over all time steps, the total complexity for the GNN computations is:

\[
\mathcal{O}\left( T V d^{2} + \sum_{t=1}^{T} E_t d^{2} \right).
\]

The RNN (or any recurrent unit) further processes the node embeddings with complexity:

\[
\mathcal{O}\left( V T d^{2} \right).
\]

Therefore, the overall computational complexity of the \emph{time-and-graph} architecture is:

\begin{equation}
\label{eq:complexity_time_and_graph}
\mathcal{O}\left( V T d^{2} + \sum_{t=1}^{T} E_t d^{2} \right).
\end{equation}

\subsection{\emph{time-then-graph} Architecture}

In the \emph{time-then-graph} architecture, temporal evolutions of node and edge attributes are modeled first using sequence models (e.g., RNNs). A GNN is then applied to the resulting static graph with aggregated temporal information.

The computational complexities are as follows:

\paragraph{Node Sequence Modeling.}
For each node \( i \in \mathcal{V} \), an RNN processes its temporal features \( \mathbf{X}_{i, \leq T} \):
\[
\mathcal{O}\left( V T d^{2} \right).
\]

\paragraph{Edge Sequence Modeling.}
For each edge \( (i, j) \in \mathcal{E}_{\text{agg}} \), an RNN processes its temporal adjacency features \( \bf{a}_{i, j, \leq T} \):
\[
\mathcal{O}\left( E_{\text{agg}} T d^{2} \right).
\]

\paragraph{GNN over Aggregated Graph.}
A GNN is applied once to the static graph with updated node and edge representations:
\[
\mathcal{O}\left( V d^{2} + E_{\text{agg}} d^{2} \right).
\]

Therefore, the overall computational complexity of the \emph{time-then-graph} architecture is:
\begin{equation}
\label{eq:complexity_time_then_graph}
\mathcal{O}\left( \left( V + E_{\text{agg}} \right) T d^{2}  \right).
\end{equation}

\subsection{Comparison of Complexities}

To compare the computational complexities, we consider the dominant terms in Equations~\eqref{eq:complexity_graph_then_time}, \eqref{eq:complexity_time_and_graph}, and \eqref{eq:complexity_time_then_graph}.

\begin{itemize}
    \item \textbf{graph-then-time}:

    \[
    \mathcal{O}\left( V T d^{2} + \sum_{t=1}^{T} E_t d \right).
    \]

    \item \textbf{time-and-graph}:

    \[
    \mathcal{O}\left( V T d^{2} + \sum_{t=1}^{T} E_t d^{2} \right).
    \]

    \item \textbf{time-then-graph}:

    \[
    \mathcal{O}\left( \left( V + E_{\text{agg}} \right) T d^{2} \right).
    \]
\end{itemize}

By comparing these computational complexities, the \textbf{time-then-graph} method is superior under the aggregated number of edges $E_{\text{agg}}$ is smaller than the total sum of edges over all time steps, i.e., $E_{\text{agg}} \ll \sum_{t=1}^{T} E_t$.



\section{Experimental details and implementation}
\label{section: training} 

\noindent\textbf{Model training.} Training for all models was accomplished using the Adam optimizer \citep{adam2014} in PyTorch on NVIDIA A6000 GPU and Xeon Gold 6258R CPU.  

\noindent\textbf{Data augmentation.} 
During the training process, we applied the following data augmentation techniques, following prior studies \citep{GNN_ICLR22, TSTCC_ijcai21}: randomly scaling the amplitude of the raw EEG signals by a factor between 0.8 and 1.2.

\noindent\textbf{Implementation details.}
We used binary cross-entropy as the loss function to train all models. 
The models were trained for 100 epochs with an initial learning rate of 1e-4. 
To enhance efficiency and sparsity, we set 
$\tau = 3$ and the top-3 neighbors’ edges were kept for each node. 
The dropout probability was 0 (i.e., no dropout). 
\method has two Mamba consisting of two stacked layers and a two-layer GCN with 64 hidden units, resulting in 114,794 trainable parameters. 
We release
GitHub repository (\url{https://github.com/Kotoge/EvoBrain}).

\noindent\textbf{Implementation of baselines.}
For baselines, DCRNN \citep{GNN_ICLR22}, EvolveGCN \citep{AAAI20_EvolveGCN}, and LSTM \citep{LSTM}, we used the number of RNN and GNN layers and hidden units in our \method. 
For BIOT, we use the same model architecture described in \citet{BIOT}, i.e., four Transformer layers with eight attention heads and 256-dimensional embedding.
For LaBraM, we used the official checkpoint provided by \cite{LaBraM_iclr2024}.
For CNN-LSTM, we use the same model architecture described in \citet{CNN-LSTM}, i.e., two stacked convolutional layers (32 $3 \times 3$ kernels), one max-pooling layer ($2 \times 2$), one fully-connected layer (output neuron = 512), two stacked LSTM layers (hidden size = 128), and one fully connected layer.
Table \ref{tab:trainable_params} shows a comparison of trainable parameters, with our \method achieving the best performance using the relatively few parameters.

\begin{table}[h]
\centering
\caption{Comparison of trainable parameters.}
\resizebox{\linewidth}{!}{%
\begin{tabular}{l|c|c|c|c|c|c|c|c|c}
\toprule
\textbf{} & \method & GRU-GCN & DCRNN & EvolveGCN & EEGPT & LaBraM & BIOT & CNN-LSTM & LSTM \\ \midrule
\makecell{Trainable \\ Parameters} & 183,834 & 114,794 & 280,769 & 200,301 & 51,221,121 &5,803,137 & 3,187,201 & 5,976,033 & 536,641 \\ \bottomrule
\end{tabular}%
}
\label{tab:trainable_params}
\end{table}

\section{Data description}
Tabel \ref{tab:dataset_summary} shows the details of TUSZ dataset.
\label{data}
\begin{table*}[h]
\centering
\caption{Number of EEG data samples and patients in the train, validation, and test sets on TUSZ dataset. Train, validation, and test sets consist of distinct patients.}
\label{tab:dataset_summary}
\resizebox{\textwidth}{!}{\begin{tabular}{l|l|c|c|c|c|c|c}
\toprule
&\multirow{2}{*}{\begin{tabular}[c]{@{}l@{}}EEG Input \\ Length \\ (Secs)\end{tabular}} & \multicolumn{2}{c|}{Train Set} & \multicolumn{2}{c|}{Validation Set}  & \multicolumn{2}{c}{Test Set}                                                                                                        \\ \cline{3-8} Task&   & \begin{tabular}[c]{@{}c@{}}EEG samples \\ \% 
(Pre-) Seizure\end{tabular} & \begin{tabular}[c]{@{}c@{}}Patients\end{tabular} & \begin{tabular}[c]{@{}c@{}}EEG samples \\ \% 
(Pre-) Seizure\end{tabular} & \begin{tabular}[c]{@{}c@{}}Patients\end{tabular} & \begin{tabular}[c]{@{}c@{}}EEG samples \\ \% 
(Pre-) Seizure\end{tabular} & \begin{tabular}[c]{@{}c@{}}Patients\end{tabular} \\ \midrule
\multirow{2}{*}{\begin{tabular}[c]{@{}l@{}}Seizure \\ Detection\end{tabular}} & 60-s                                                                                  & \begin{tabular}[c]{@{}c@{}}38,613 (9.3\%)\end{tabular}         & \begin{tabular}[c]{@{}c@{}}530 \end{tabular}          & \begin{tabular}[c]{@{}c@{}}5,503 (11.4\%)\end{tabular}         & \begin{tabular}[c]{@{}c@{}}61 \end{tabular}           & \begin{tabular}[c]{@{}c@{}}8,848  (14.7\%)\end{tabular}         & \begin{tabular}[c]{@{}c@{}}45\end{tabular}           \\ 
& 12-s                                                                                  & \begin{tabular}[c]{@{}c@{}}196,646 (6.9\%)\end{tabular}         & \begin{tabular}[c]{@{}c@{}}531 \end{tabular}          & \begin{tabular}[c]{@{}c@{}}28,057  (8.7\%)\end{tabular}         & \begin{tabular}[c]{@{}c@{}}61 \end{tabular}           & \begin{tabular}[c]{@{}c@{}}44,959 (10.9\%)\end{tabular}        & \begin{tabular}[c]{@{}c@{}}45 \end{tabular}           \\ \midrule  
\multirow{2}{*}{\begin{tabular}[c]{@{}l@{}}Seizure \\ Prediction\end{tabular}} & 60-s                                                                                  & \begin{tabular}[c]{@{}c@{}}7,550  (9.9\%)\end{tabular}         & \begin{tabular}[c]{@{}c@{}}530 \end{tabular}          & \begin{tabular}[c]{@{}c@{}}999  (12.0\%)\end{tabular}         & \begin{tabular}[c]{@{}c@{}}61 \end{tabular}           & \begin{tabular}[c]{@{}c@{}}1,277 (24.4\%)\end{tabular}         & \begin{tabular}[c]{@{}c@{}}45\end{tabular}           \\ 
& 12-s                                                                                  & \begin{tabular}[c]{@{}c@{}}40,716 (12.8\%)\end{tabular}         & \begin{tabular}[c]{@{}c@{}}531 \end{tabular}          & \begin{tabular}[c]{@{}c@{}}5,439  (16.0\%)\end{tabular}         & \begin{tabular}[c]{@{}c@{}}61 \end{tabular}           & \begin{tabular}[c]{@{}c@{}}6,956 (27.6\%)\end{tabular}        & \begin{tabular}[c]{@{}c@{}}45 \end{tabular}           
\\ \bottomrule
\end{tabular}}
\end{table*}

\section{Memory Efficiency}

\begin{table}[h]
\centering
\caption{GPU memory consumption during training and inference. EvoBrain offers competitive memory efficiency compared to baselines.}
\label{tab:gpu_memory}
\begin{tabular}{lcc}
\toprule
\textbf{Model} & \textbf{Training (MB)} & \textbf{Inference (MB)} \\
\midrule
EvoBrain       & 51.35                  & 46.64                    \\
GRU-GCN        & 54.61                  & 52.09                    \\
GRAPHS4MER     & 369.46                 & 93.02                    \\
DCRNN          & 21.10                  & 20.54                    \\
EvolveGCN      & 22.06                  & 20.07                    \\
\bottomrule
\end{tabular}
\end{table}
Figure 4 shows memory efficiency comparisons across dynamic GNNs in both training and inference settings with a batch size of 1. While DCRNN and EvolveGCN exhibit the lowest memory usage, EvoBrain achieves over 10× faster computation compared to these baselines, making it a compelling choice for time-critical applications. Among the three time-then-graph models, EvoBrain demonstrates the best balance between memory consumption and computational speed, with significantly lower memory requirements than GRAPHS4MER and comparable usage to GRU-GCN.


\end{document}